\documentclass{article}

\PassOptionsToPackage{numbers, compress}{natbib}
\usepackage[preprint]{neurips_2026}

\usepackage[utf8]{inputenc} % allow utf-8 input
\usepackage[T1]{fontenc}    % use 8-bit T1 fonts
\usepackage{hyperref}       % hyperlinks
\usepackage{url}            % simple URL typesetting
\usepackage{microtype}
\usepackage{subcaption}
\usepackage{booktabs}       % professional-quality tables
\usepackage{amsfonts}       % blackboard math symbols
\usepackage{nicefrac}       % compact symbols for 1/2, etc.
\usepackage{microtype}      % microtypography
\usepackage{xcolor}         % colors
\usepackage{graphicx}
\usepackage{wrapfig}
\usepackage{titletoc}

\usepackage{amsmath}
\usepackage{amssymb}
\usepackage{mathtools}
\usepackage{amsthm}
\definecolor{academicBlue}{RGB}{76,114,176}
\definecolor{academicRed}{RGB}{221,132,82}

\usepackage{algorithm}
\usepackage{algorithmic}
\usepackage{pgfplots}
\pgfplotsset{compat=1.18}
\theoremstyle{plain}
\newtheorem{theorem}{Theorem}[section]

\newtheorem{lemma}[theorem]{Lemma}

\theoremstyle{definition}
\newtheorem{definition}[theorem]{Definition}
\newtheorem{assumption}[theorem]{Assumption}
\theoremstyle{plain}
\newtheorem{remark}{Remark}

\title{HyCO: A Hybrid Neural Solver
for \\ Combinatorial Optimization}

\author{%
  Yuheng Li\\
  Department of Data Science\\
  College of William \& Mary\\
  \texttt{yli95@wm.edu} \\
  \And
  Di Yang \\
  Department of Data Science\\
  College of William \& Mary\\
  \texttt{dyang06@wm.edu} \\
  \AND
  Haipeng Chen \\
  Department of Data Science\\
  College of William \& Mary\\
  \texttt{hchen23@wm.edu} \\
  \And
  Yanhai Xiong \\
  Department of Data Science\\
  College of William \& Mary\\
  \texttt{yxiong05@wm.edu} \\
}

\begin{document}

\maketitle

\begin{abstract}
  Sequential reinforcement learning (RL) solvers and global diffusion model (DM) solvers for neural combinatorial optimization exhibit complementary failure modes under an optimization-regret view. The former enjoys small marginal regret in the early construction stage, but suffers from horizon-wise compounding errors with super-linear regret growth; the latter avoids horizon compounding but incurs linear or sublinear regret w.r.t. the dimension of the remaining unsolved subspace. We propose \underline{Hy}brid Neural Solver for \underline{C}ombinatorial \underline{O}ptimization (\textbf{HyCO}), a hybrid inference algorithm that constructs a solution prefix with an RL solver and adaptively switches to a conditional DM to complete the remaining decisions. To characterize why such hybridization helps, when to trigger the handover, and how to realize it in practice, we first develop a unified error-scaling theoretical framework and prove that, under explicit error-scaling assumptions, i) the hybrid structure achieves strictly lower expected regret than either backbone alone, and ii) there exists a unique optimal trigger step that minimizes the hybrid regret. We then design a lightweight adaptive trigger that combines policy entropy and RL–DM disagreement to detect trajectory-level signals of the regime shift as a practical proxy, since the optimal trigger step is defined at the expected-regret level and is not directly computable on individual trajectories. Experimental results on diverse benchmarks demonstrate that HyCO achieves consistent improvements over both backbones and support the empirical effectiveness of adaptive triggering.

\end{abstract}

\section{Introduction}
Combinatorial optimization (CO) problems, such as the Traveling Salesman Problem (TSP), are fundamental to operations research but notoriously difficult to solve at scale due to their NP-hard nature \citep{Garey1979, Papadimitriou1982}. While traditional exact solvers like Concorde \citep{Applegate2006} provide optimality, their exponential complexity limits scalability. This has spurred the design of neural solvers that learn heuristics directly from data. Among recent neural approaches, two representative solver classes have emerged: sequential construction solvers based on reinforcement learning (RL) \citep{Bello2016, Kool2019,mazyavkina2021reinforcement}, and global generative solvers based on diffusion models (DMs) \citep{Graikos2022, Sun2023}.

Both classes aim to produce high-quality solutions, but they rely on fundamentally different inference mechanisms and exhibit \textbf{complementary error trends} under an optimization-regret view:

% Both classes aim to produce high-quality solutions, but their fundamentally different inference mechanisms lead to \textbf{complementary error trends} under an optimization-regret view:

RL-based sequential solvers typically construct solutions in an autoregressive manner. At early decision steps, the marginal regret is typically small as the solver operates within the high-density regions of the ground-truth state distribution. However, such solvers are inherently vulnerable to \textbf{horizon-wise compounding errors} \citep{ross2010efficient,ross11a2011online}: as the decision horizon extends, even small local deviations steer the state trajectory away from the training state distribution, causing decision errors to propagate and accumulate super-linearly w.r.t. the horizon \citep{jin2018Qlearning,jin2020provably,velegkas2022reinforcement}.

\begin{figure*}[t]
    \vspace{-1em}
    \centering
    \begin{minipage}[c]{0.36\linewidth} 
        \centering
        \includegraphics[height=4.5cm, keepaspectratio]{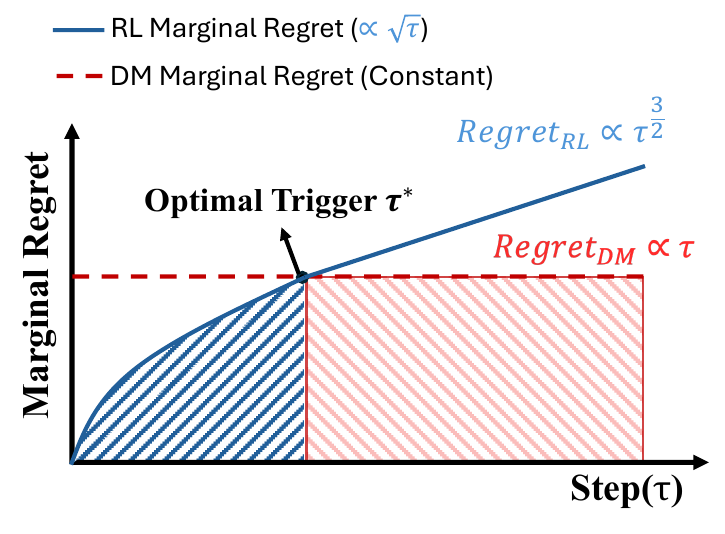} % 确保文件名正确
        \vspace{2pt}
        \centerline{\small (a) Regret Dynamics (Motivation)}
        \vspace{1pt}
    \end{minipage}
    \hfill
    \begin{minipage}[c]{0.56\linewidth} 
        \centering
        \includegraphics[height=4.5cm, keepaspectratio]{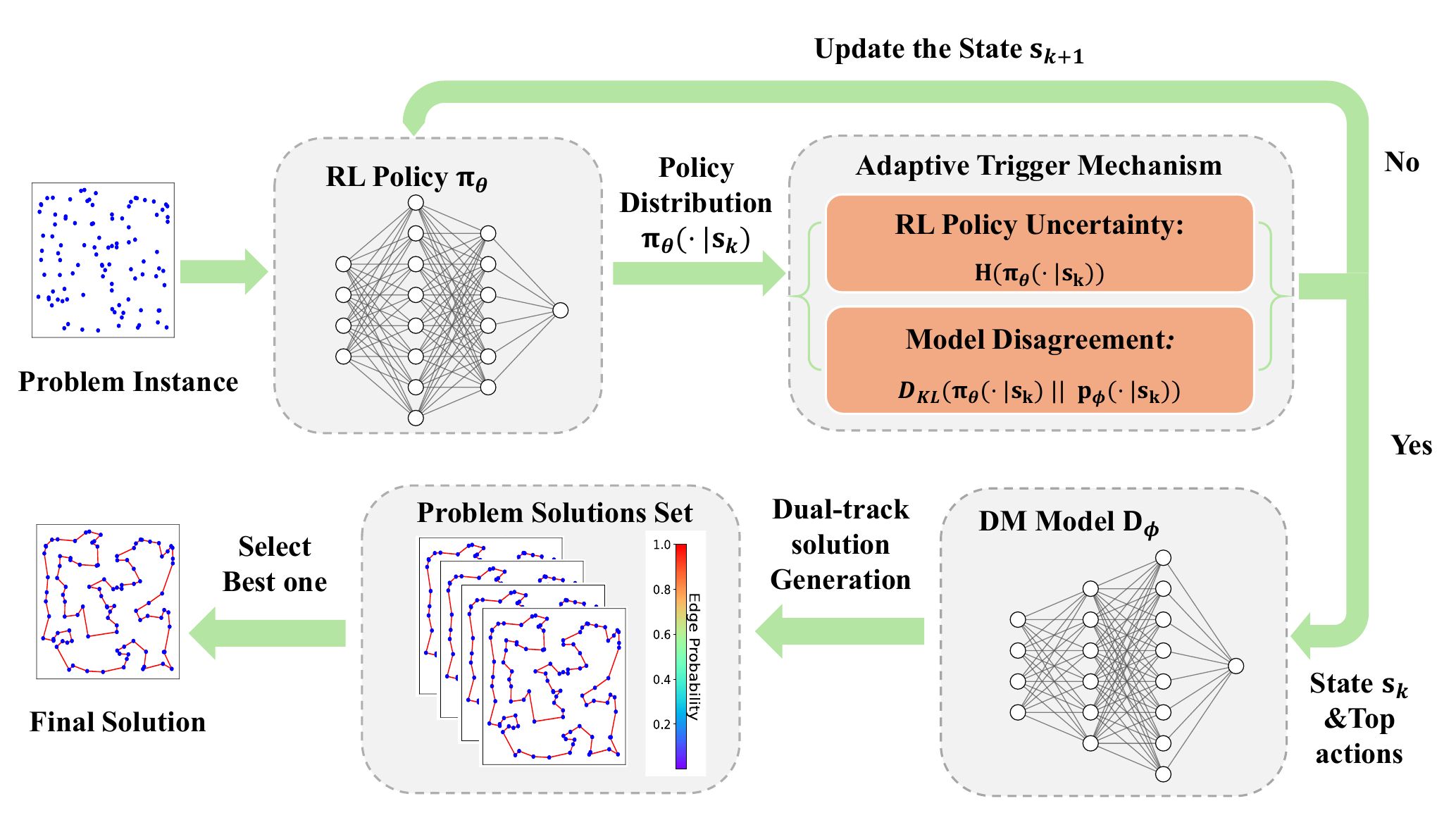} % 确保文件名正确
        \vspace{2pt}
        \centerline{\small (b) HyCO Inference Framework}
    \end{minipage}
    \vspace{1pt}
    \caption{
        \textbf{Motivation and Method.} 
        \textbf{(a)} Schematic illustration of the unified error dynamics. The marginal regret of RL increases as the trajectory lengthens, while that of a conditional DM is constant as the solution prefix grows. Their intersection defines the optimal trigger step $\tau^*$. 
        \textbf{(b)} The HyCO framework, which constructs a prefix with a RL solver, detects the regime shift via a lightweight adaptive trigger, and hands over to a conditional DM for dual-track completion.
    }
    \label{fig:teaser}
    \vspace{-1em}
\end{figure*}

In contrast, DM-based global solvers exhibit a different error profile. Unlike RL solvers, which compound errors over sequential decision steps, DMs incur a solution-space-governed regret that can scale \textbf{linearly} (or sub-linearly) with the effective dimension of the solution space \citep{benton2023nearly}. While generating a full high-dimensional solution from scratch incurs high regret due to the vast search space, the marginal regret of a conditional DM remains constant or may even decrease as the solution prefix grows. As the constraints from the prefix tighten, the remaining subspace shrinks, simplifying the generative task. This inverse relationship implies a structural crossing point where the optimal strategy shifts from RL's construction to DM's generation, as visualized in Figure~\ref{fig:teaser}(a). 

This crossing point motivates a principled hybrid integration: leveraging an RL solver to construct a partial solution prefix and subsequently handing over the remaining decisions to a conditional DM for global generation. Three critical questions arise from this hybrid structure: \textit{why} it is beneficial, \textit{when} to trigger the handover, and \textit{how} to implement it on individual trajectories without oracle access. 

For the first two, we develop a unified error-scaling framework that aligns the regret dynamics of both paradigms under a common metric. The analysis shows that, under explicit assumptions, a single RL-to-DM handover can achieve strictly lower expected regret than either backbone alone, and that the regret-minimizing trigger is unique, occurring at the marginal-regret balance between the two solvers. For the third, since the optimal trigger is defined at the expected-regret level and cannot be directly evaluated along a single trajectory, we propose HyCO (Figure 1(b)), a plug-and-play hybrid inference algorithm. HyCO combines a single-trigger pipeline with dual-track solution generation and employs a lightweight adaptive trigger based on two complementary proxies: policy entropy, which captures local uncertainty of the RL policy, and RL–DM disagreement, which detects a mismatch between RL's local preference and the DM-induced global prior.

Our contributions are summarized as follows: 1) A unified marginal-regret framework proving that a single RL-to-DM handover achieves strictly lower expected regret than either backbone alone, with a unique regret-minimizing trigger. 2) HyCO, a hybrid inference algorithm with an adaptive dual-signal trigger for detecting the regime shift at the trajectory level. 3) Experiments on TSP, Maximum Independent Set (MIS), and Orienteering Problem (OP) show that HyCO consistently improves over both backbones, while oracle validation and ablations support the effectiveness of the adaptive trigger.

\section{Related Work}
\noindent\textbf{RL Solvers.} RL solvers frame CO problems as sequential decision-making. Following the seminal Pointer Network \citep{ Vinyals2015,Bello2016}, \cite{Kool2019} established the Transformer-based blueprint, later enhanced by techniques for multiple optima \citep{Kwon2020}, solution symmetries \citep{Kim2022}. NAR4TSP \citep{xiao2024nar4tsp} proposed a non-autoregressive RL algorithm, but with lower solution quality than the autoregressive versions. Subsequent research expanded RL to graph-based generalization \citep{bengio2020machine, chen2021contingency} and advanced learning paradigms including latent space search \citep{chalumeau2023co}, hierarchical RL \citep{Feng2025hrl4sco}, preference optimization \citep{pan2025preference}, and game-theoretic modeling \citep{li2025ccdorl}. To enhance performance, recent works integrate RL with heavy decoders \citep{luo2023neural} or meta-heuristics like genetic algorithms \citep{kim2025neural} and ant colony sampling \citep{kim2025ant}. For large-scale COPs, specialized architectures such as Invit \citep{fang2024invit}, UDC \citep{zheng2024udc}, and BQ-NCO \citep{drakulic2023bq} have been proposed. Despite their efficiency, these solvers inherently suffer from \textit{horizon-wise compounding errors} due to their myopic construction, leading to super-linear regret growth as the problem scale increases.

\noindent\textbf{Heatmap and Diffusion Models for CO.} 
To overcome the sequential limitations of RL solvers, another line of research has focused on non-autoregressive, global DM methods that generate a solution in a single shot, often by producing a ``heatmap" of edge or node probabilities \citep{LiCK18,FuQZ21}. \cite{Joshi2019} used Graph Convolutional Networks (GCNs) to predict edge inclusion probabilities for TSP. More recently, diffusion models have been adapted for CO. \cite{Graikos2022} mapped TSP instances to images to be solved by a standard image diffusion model. A more direct approach, DIFUSCO \citep{Sun2023}, introduced a graph-based diffusion framework that operates on the problem's native graph structure, casting it as a discrete vector generation task. This paradigm has been explored for TSP and extended to other routing problems like VRP \citep{macoexpander}, while related score-based generative models have been proposed for a broader class of CO problems. While global solvers avoid horizon compounding by capturing global dependencies, they incur a subspace-governed regret that scales with the full solution space dimensionality \citep{benton2023nearly}. Consequently, generating solutions from scratch in high-dimensional settings leads to significant marginal regret, limiting precision unless the effective dimensionality is reduced.

\section{Preliminaries}
%We first make a quick overview of the sequential RL solvers and the global DM solvers.

\subsection{Sequential Inference with RL Solvers}

RL-based solvers formulate CO problems as sequential decision-making processes, where a solution is constructed step by step according to a learned policy. Let $s_k \in \mathcal{S}$ denote the state at step $k$, and let $\pi(\cdot | s_k)$ be an RL policy that outputs a distribution over feasible actions $\mathcal{A}_k$. At each step, an action is selected according to the policy, for example via greedy decoding:
\begin{equation}
a_k = \arg\max\nolimits_{a \in \mathcal{A}_k} \pi(a \mid s_k).
\end{equation}
This sequential construction enables highly efficient inference, requiring only a linear number of forward passes with respect to the solution length. However, decisions made at early steps constrain all subsequent choices. As a result, local errors introduced during the construction process propagate through the remaining horizon, leading to compounding errors and potentially suboptimal solutions.

\subsection{Global Inference with Diffusion Models}

A DM is a non-autoregressive, generative model that learns to produce a complete solution holistically. For CO problems like TSP, where a solution can be represented by a binary adjacency matrix $A \in \{0, 1\}^{N \times N}$, the model uses a discrete diffusion process. It consists of a denoising network, $D_\phi$, trained to reverse a noising process that gradually corrupts the solution by flipping its binary entries. Notably, this process can be conditioned on prior information, such as a given prefix of a tour in a TSP, to guide the generation.

Inference (sampling) starts with a matrix of pure random noise, $A_T$, and iteratively refines it over $T$ total steps to produce a clean solution, $x_0$. At each denoising step $t$ (different from the step $k$ in the episode for RL), the denoising network $D_\phi$ predicts the original clean solution $\hat{x}_0$ based on the current noisy matrix $A_t$ and a conditioning prefix $c$. This prediction is then used to sample a slightly less noisy matrix $A_{t-1}$. The reverse step is generally formulated as:
\begin{equation}
A_{t-1} \sim p_\phi(A_{t-1}|A_t, c), t \in \{1, ..., T\}, A_t \in [0, 1]^{N \times N},%\chen{t range}
\end{equation} where $p_\phi$ denotes the learned reverse transition distribution. This iterative process allows the model to capture complex global dependencies, leading to high-quality results. 

\section{Unified Regret Framework and Optimal Trigger Analysis}\label{sec:theory_structure}
% To synergize sequential RL and global DM, we first align their disparate inference paradigms under a unified regret metric. We establish a rigorous error-scaling analysis to prove the existence and uniqueness of an optimal timing of triggering the DM solver. Guided by these theoretical insights, we then introduce the HyCO algorithm in Section~\ref{sec:hyco}, a plug-and-play hybrid inference algorithm via a lightweight adaptive trigger. \yuheng{polish again. Add the main contribution of theory in this paragraph.}

%\subsection{Regret Dynamics and Optimal Trigger Structure}\label{sec:theory_structure}

This section develops a unified theoretical framework for analyzing our RL-to-DM hybrid structure. We begin by aligning RL and DM under a unified regret metric (Definition~\ref{def:regret}), then characterize their contrasting error-scaling behaviors: the DM's regret scales sublinearly with the unsolved dimension (Lemma~\ref{thm:dm_scaling}), while the RL's regret grows super-linearly with the decision horizon (Lemma~\ref{lemma:rl_scaling}). Building on these lemmas, we prove that under the boundary conditions, there exists a unique optimal trigger step $\tau^*$ at which the handover from RL to DM minimizes the total hybrid regret (Theorem~\ref{thm:optimal_handoff}). This optimal step satisfies the marginal-regret balance 
$r_{\mathrm{RL}}(\tau^*) = r_{\mathrm{DM}}(\tau^*)$, and the resulting hybrid strategy is provably superior to either backbone alone.

% To compare the disparate paradigms of sequential RL and generative DM, we first establish a unified evaluation metric. While RL typically maximizes cumulative reward and DM optimizes distributional matching (e.g., log-likelihood), we align both under the lens of \textit{regret}.

\begin{definition}[Unified Regret Metric]
\label{def:regret}
Let $f: \mathcal{X} \to \mathbb{R}$ denote the objective function to be minimized, where $\mathcal{X} \subseteq \{0,1\}^d$ is the $d$-dimensional discrete decision space. We define the regret $\mathcal{R}(p, q)$ of a solver's output distribution $p$ relative to an optimal target distribution $q$ as the difference in their expected objectives:
\begin{equation}
    \mathcal{R}(p, q) = \mathbb{E}_{x \sim p}[f(x)] - \mathbb{E}_{x \sim q}[f(x)], x \in \mathcal{X}.
\end{equation}
\end{definition}
For the RL solver, $q=\delta_{x^*}$ is the Dirac measure at the global optimum $x^*$. The regret simplifies to the absolute optimality gap: $\mathcal{R}_{RL} = \mathbb{E}_{x \sim \pi}[f(x)] - f(x^*)$. For the conditional DM, given a prefix $c$, $q(\cdot\mid c)$ denotes the conditional distribution of the optimal completion. The regret instantiates as $\mathcal{R}_{DM}(p, q \mid c) = \mathbb{E}_{x \sim p(\cdot\mid c)}[f(x)] - \mathbb{E}_{x \sim q(\cdot\mid c)}[f(x)]$, quantifying the expected suboptimality of the model's output distribution $p(\cdot\mid c)$ relative to $q(\cdot\mid c)$. 

%The regret $\mathcal{R}_{DM}(p, q \mid c) $ measures the generation error relative to the best possible completion feasible under the current prefix $c$, i.e. $\mathcal{R}_{DM}(p, q \mid c) = \mathbb{E}_{x \sim p(\cdot\mid c)}[f(x)] - \mathbb{E}_{x \sim q(\cdot\mid c)}[f(x)]$. To analyze the regret scaling for generative models, we adopt a standard regularity assumption for continuous relaxations of CO problems.

%To analyze the regret scaling for generative models, we adopt a standard regularity assumption for continuous relaxations of CO problems. \yuheng{Rewrite this sentence, why do we set this assumption, and is it reasonable?}

To analyze $\mathcal{R}_{DM}$, we require a mild regularity condition on the relaxed objective, i.e. Lipschitz continuity, a condition that is essentially automatic for standard CO problems like TSP and MIS.

\begin{assumption}[Regularity of Objective Landscape]
\label{assump:regularity}
The continuous relaxation $\tilde{f}: [0,1]^d \rightarrow \mathbb{R}$ of the objective function $f$ is \textit{$\ell$-Lipschitz continuous}. %This implies that the solution cost is stable under small perturbations in the decision space.
\end{assumption}

% \begin{assumption}[Regularity of Objective Landscape]
% \label{assump:regularity}
% The continuous relaxation of the objective function $f(x)$ is \textit{$\ell$-Lipschitz continuous} over the solution manifold. %This implies that the solution cost is stable under small perturbations in the decision space.
% \end{assumption}

\begin{remark}
    Any $C^1$ function on the compact domain $[0,1]^d$ is necessarily Lipschitz by the Extreme Value Theorem. In Appendix~\ref{sec:app_theory_assump}, we verify that standard CO objectives such as TSP and MIS satisfy this condition under their continuous relaxations.
\end{remark}

With this regularity in hand, we can connect the Kullback-Leibler (KL) divergence minimized during DM training to the optimization regret.

\begin{lemma}[The Geometry-Statistics Bridge]
\label{lemma:kl_to_regret}
Under Assumption \ref{assump:regularity}, the regret $\mathcal{R}_{DM}(p, q)$ is upper-bounded by the KL divergence:
\begin{equation}
\mathcal{R}_{DM}(p, q) \le \ell \cdot \mathcal{C} \sqrt{KL(q \| p)},
\end{equation}
where $\mathcal{C}$ is a constant determined by the transport properties of the target distribution $q$ (e.g., its concentration or the domain's diameter).
\end{lemma}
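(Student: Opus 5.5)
The natural route is Kantorovich–Rubinstein duality followed by a transport inequality. Since $\mathcal{X}\subseteq\{0,1\}^d\subset[0,1]^d$ and $\tilde f$ agrees with $f$ on $\mathcal{X}$, we can write
\[
\mathcal{R}_{DM}(p,q)=\mathbb{E}_{p}[\tilde f]-\mathbb{E}_{q}[\tilde f].
\]
Under Assumption~\ref{assump:regularity}, $\tilde f/\ell$ is $1$-Lipschitz with respect to the metric used in the assumption; take it to be Euclidean, so that on the hypercube it equals the square root of the Hamming distance. Duality then gives
\[
\mathcal{R}_{DM}(p,q)\le \ell\, W_1(p,q).
\]
This inequality also bounds $|\mathcal{R}_{DM}|$, so the sign of the regret causes no difficulty. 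The whole problem therefore reduces to proving a transport–entropy inequality of the form $W_1(p,q)\le \mathcal{C}\sqrt{KL(q\|p)}$.

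For that inequality I would use the simplest argument that produces a constant depending only on the geometry. Coupling the two measures optimally in total variation gives
\[
W_1(p,q)\le \operatorname{diam}(\mathcal{X})\,\|p-q\|_{TV}.
\]
Pinsker's inequality then gives $\|p-q\|_{TV}\le\sqrt{\tfrac12 KL(q\|p)}$. Pinsker is symmetric in which argument comes first within the total-variation bound, so the orientation $KL(q\|p)$, which is the forward KL that diffusion training minimizes, is fine. Combining the two steps yields the claim with $\mathcal{C}=\operatorname{diam}(\mathcal{X})/\sqrt2\le\sqrt{d/2}$ in the Euclidean metric. This matches the lemma's description of $\mathcal{C}$ as depending on "the domain's diameter". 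The same argument applies verbatim to the conditional version $p(\cdot\mid c),q(\cdot\mid c)$, with the diameter of the remaining subspace in place of $\operatorname{diam}(\mathcal{X})$. That substitution is the hook later results need, since the constant then shrinks as the prefix grows.

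If a sharper, "concentration"-type constant is wanted, I would instead invoke a $T_1$ (Marton/Bobkov–Götze) inequality for the reference measure. The functional inequality in question has the form
\[
W_1(\mu,\nu)\le\sqrt{2\sigma^2 KL(\mu\|\nu)}.
\]
It holds for $\nu$ satisfying a sub-Gaussian Lipschitz-concentration condition, for instance product measures on the cube by Marton's argument. Here the constant is $\mathcal{C}=\sqrt{2}\sigma$, so it depends on the concentration of the relevant measure.

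The main obstacle is this orientation issue: the $T_1$ inequality must hold for the second argument of the KL, which here is $p$, while the statement attributes the constant to $q$. I would first try to sidestep the issue with the diameter/Pinsker bound, which needs no property of either measure. I would then remark that, under an assumption that $q$ satisfies $T_1$, the symmetric version with $KL(p\|q)$ also holds. Either way, the stated bound follows with an explicit $\mathcal{C}$.
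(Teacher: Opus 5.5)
Your proof is correct and follows essentially the paper's route: Kantorovich--Rubinstein duality gives $\mathcal{R}_{DM}\le \ell\,W_1(p,q)$, then $W_1\le D\cdot TV$ via the domain diameter and Pinsker yields $\mathcal{C}=D/\sqrt{2}$, which is exactly the paper's ``general case.'' Your orientation caveat is also well founded: the paper's ``regular case'' asserts $W_2(q,p)\le\sqrt{2\sigma^2 KL(q\|p)}$ from a log-Sobolev/Talagrand inequality for $q$, but such an inequality for $q$ only controls $KL(\cdot\,\|\,q)$, so that branch (and hence the $\min$ in the paper's $\mathcal{C}$) is justified only for $KL(p\|q)$ or if $p$ itself satisfies the inequality, whereas your diameter/Pinsker constant holds as stated.
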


% \begin{proof}[Sketch]
% 1. Geometry: By Kantorovich-Rubinstein duality \citep{villani2008optimal}, $|\mathbb{E}_p[f] - \mathbb{E}_q[f]| \le \ell \cdot W_1(p, q)$.
% 2. Statistics: The $W_1$ distance is bounded by $\sqrt{KL}$ via a transportation cost inequality (e.g., Talagrand's $T_2$ for smoothed targets or Pinsker's for bounded domains). The complete proof is in Appendix \ref{sec:app_theory_bridge}.
% \end{proof}

The proof is in Appendix \ref{sec:app_theory_bridge}. Now turn to the regret scaling of the conditional DM. Since the prefix $c$ fixes a subset of decision variables, the model only generates over the remaining unsolved subspace. We show that the regret scales with the square root of the dimension of the unsolved subspace.
%We now analyze the error scaling of the conditional DM. Unlike unconditional generation, our model operates under ``hard'' constraints (masking, i.e. partial solution generated by RL). We prove that the regret of such a model is governed by the \textit{effective dimension} of the unsolved subspace.

% \begin{theorem}[Dimension-Dependent Regret of Conditional DM]
% \label{thm:dm_scaling}
% Given a prefix $c$, let $p$ and $q$ be the generated distribution of a conditional DM and the ground-truth optimal conditional distribution, respectively. Suppose the conditioning is enforced via an orthogonal projection operator $M \in \mathbb{R}^{d \times d}$ such that $Mx = c$. Let $d_{eff} = \text{rank}(I-M)$ denote the dimension of the \textbf{unsolved subspace}. Assuming the DM achieves a per-dimension $L^2$ score-matching error $\epsilon_{score}$ on the unsolved manifold, the regret $\mathcal{R}_{DM}(p, q \mid c)$ under Assumption 4.2 satisfies:
% \begin{equation}
%     \mathcal{R}_{DM}(p, q \mid c) \le \mathcal{O}\left( \ell \cdot \sqrt{d_{eff}} \cdot \epsilon_{score} \right),
% \end{equation}
% where $\ell$ is the Lipschitz constant of the relaxed objective $f$. This bound implies that the model's optimality gap contracts as the unsolved dimensionality $d_{eff}$ decreases.
% \end{theorem}

\begin{lemma}[Dimension-Dependent Regret of Conditional DM]
\label{thm:dm_scaling}
Given a prefix $c$, let $p$ and $q$ be the generated distribution of a conditional DM and the ground-truth optimal conditional distribution, respectively. Let $d_{eff}$ denote the dimension of the \textbf{unsolved subspace}, i.e., the number of decision variables
not fixed by the prefix $c$. Suppose the DM achieves a per-dimension $L^2$ score-matching error $\epsilon_{score}$ on the unsolved subspace, then the regret $\mathcal{R}_{DM}(p, q \mid c)$ under Assumption 4.2 satisfies:
\begin{equation}
    \mathcal{R}_{DM}(p, q \mid c) \le \mathcal{O}\left( \ell \cdot \sqrt{d_{eff}} \cdot \epsilon_{score} \right),
\end{equation}
where $\ell$ is the Lipschitz constant of the relaxed objective $f$. 
%This bound implies that the model's optimality gap contracts as the unsolved dimensionality $d_{eff}$ decreases.
\end{lemma}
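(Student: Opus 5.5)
The plan is to chain Lemma~\ref{lemma:kl_to_regret} with a standard score-matching-to-KL bound for diffusion models, restricted to the unsolved coordinates. The first step is a reduction to the unsolved subspace. The prefix $c$ fixes $d-d_{eff}$ coordinates, so both $p(\cdot\mid c)$ and $q(\cdot\mid c)$ are supported on the affine slice $\{x: x_{\text{fixed}}=c\}$. They therefore coincide on the fixed coordinates, since these are enforced exactly by masking. Writing $x=(c,y)$ with $y\in[0,1]^{d_{eff}}$, the map $y\mapsto \tilde f(c,y)$ is still $\ell$-Lipschitz by Assumption~\ref{assump:regularity}. Moreover, $KL(q(\cdot\mid c)\,\|\,p(\cdot\mid c))$ equals the KL divergence between the induced laws of $y$. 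We may therefore apply Lemma~\ref{lemma:kl_to_regret} on the $d_{eff}$-dimensional problem, which gives
\[
\mathcal{R}_{DM}(p,q\mid c)\le \ell\,\mathcal{C}\sqrt{KL(q_y\,\|\,p_y)}.
\]
The constant $\mathcal{C}$ depends only on transport properties of the target. We take these to be uniform in $c$, for instance via the bounded-domain Pinsker route or a log-Sobolev/Talagrand constant of the smoothed target.

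The second step bounds the KL divergence by the score error. By Girsanov's theorem applied to the reverse SDE, as in the analysis of \citep{benton2023nearly}, the KL divergence between the true reverse process and the learned reverse process, both started from the same prior, is controlled by
\[
KL(q_y\,\|\,p_y)\lesssim \int_0^T \mathbb{E}\bigl\|s_\phi(y_t,t,c)-\nabla\log q_t(y_t\mid c)\bigr\|^2\,dt \;+\; \text{(initialization and discretization terms)}.
\]
Because the score error is assumed to be $\epsilon_{score}$ per dimension in $L^2$, summing over the $d_{eff}$ coordinates makes the integrated squared error $\lesssim d_{eff}\,\epsilon_{score}^2$. The prior-mismatch term decays exponentially in $T$. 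For the discretization error we use the nearly-linear-in-dimension step-size schedule of \citep{benton2023nearly}, which makes it $O(d_{eff}h)$. For a sufficiently fine schedule both of these terms are absorbed into the dominant score term. This yields $KL\lesssim d_{eff}\epsilon_{score}^2$, and taking square roots gives $\sqrt{KL}\lesssim\sqrt{d_{eff}}\,\epsilon_{score}$.

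Combining the two steps gives $\mathcal{R}_{DM}(p,q\mid c)\le \ell\,\mathcal{C}\,\sqrt{d_{eff}}\,\epsilon_{score}$, which is the claimed $\mathcal{O}(\ell\sqrt{d_{eff}}\,\epsilon_{score})$ bound.

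The main obstacle is the second step. The Girsanov argument is formulated for continuous SDEs, whereas the DM here operates on discrete binary variables. I would first try to argue in the continuous relaxation $[0,1]^{d_{eff}}$, on which Assumption~\ref{assump:regularity} is stated, treating the discrete sampler as a rounding of the continuous generation. If that fails, I would fall back on the analogous Girsanov-type bound for continuous-time Markov chains, where the rate-matching error plays the role of the score error. A second subtlety is keeping $\mathcal{C}$ independent of $d_{eff}$ and of the prefix $c$, so that all of the dimension dependence comes from the KL term. I would make this uniformity an explicit part of the hypotheses.
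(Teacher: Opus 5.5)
You follow the same route as the paper: reduce to the $d_{eff}$ free coordinates, apply the nearly $d$-linear KL bound of Benton et al.\ with $d$ replaced by $d_{eff}$, and push the result through Lemma~\ref{lemma:kl_to_regret}. Applying that bound directly to the $d_{eff}$-dimensional process is cleaner than the paper's re-derivation of the covariance identity on $\mathrm{Im}(\mathbf{P})$; this is legitimate because the forward kernel leaves the prefix coordinates fixed. Obtaining $d_{eff}$ by summing a per-coordinate score error is also more faithful to the hypothesis than the paper's display, which writes the score term as $\epsilon_{score}^2$ and charges $d_{eff}$ to the discretization terms.

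One step, however, fails as written. The Girsanov argument does not bound $KL(q_y\,\|\,p_y)$. It bounds $KL(q_{y,\delta}\,\|\,p_y)$, where $q_{y,\delta}$ is the target pushed forward by the noising process to an early-stopping time $\delta>0$. Here $q(\cdot\mid c)$ is supported on binary completions (a Dirac mass when the optimal completion is unique), and this persists in the continuous relaxation. Hence $KL(q_y\,\|\,p_y)=+\infty$ for any sampler whose output has a Lebesgue density.

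You need the early-stopping step that the paper includes. Extend $\tilde f$ to $\mathbb{R}^{d_{eff}}$ with the same Lipschitz constant (McShane extension), and write $\mathcal{R}_{DM}(p,q\mid c)\le \bigl(\mathbb{E}_{p}\tilde f-\mathbb{E}_{q_\delta}\tilde f\bigr)+\bigl\lvert\mathbb{E}_{q_\delta}\tilde f-\mathbb{E}_{q}\tilde f\bigr\rvert$. Apply your two steps to the first term. Bound the second by $\ell\,W_1(q,q_\delta)\lesssim \ell\sqrt{d_{eff}\,\delta}$ for an OU-type forward process. Taking $\delta\lesssim\epsilon_{score}^2$ keeps this bias at the claimed order, at the cost of a $\log(1/\delta)$ factor in the number of discretization steps. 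The bias is negligible only once $\delta$ is tied to $\epsilon_{score}$ in this way.

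On the constant, your second caveat is more serious than your examples suggest. The bounded-domain Pinsker route cannot deliver a $d_{eff}$-independent $\mathcal{C}$: the Euclidean diameter of $[0,1]^{d_{eff}}$ is $\sqrt{d_{eff}}$, which degrades the bound to $\mathcal{O}(\ell\,d_{eff}\,\epsilon_{score})$. A log-Sobolev or Talagrand constant of the smoothed target does not help either, since it controls $W_2(p,q_\delta)$ by $KL(p\,\|\,q_\delta)$, the reverse of the direction the Girsanov bound supplies. So the $\sqrt{d_{eff}}$ rate genuinely rests on the extra hypothesis you propose, for instance a dimension-free transport inequality for the model law $p$. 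The paper's proof relies on the same assumption implicitly by treating $\mathcal{C}$ as a constant.
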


The proof is in Appendix \ref{sec:appendix_proof_dm_scaling}. We next analyze the regret behavior of the RL solver. 
%In contrast, RL policies suffer from error propagation over the decision horizon. \yuheng{This should mention that we turn to the RL solver and its boundary analysis.}
%whereas the cumulative regret relative to the total number of training steps may be sublinear.

\begin{lemma}[Super-linear horizon scaling of RL Regret]
\label{lemma:rl_scaling}
For an RL policy $\pi$, the error accumulation relative to the \textbf{decision horizon} $h$ within a single episode typically exhibits super-linear scaling:
\begin{equation}
    \mathcal{R}_{RL}(h) \sim \mathcal{O}(h^{\alpha}), \quad \text{where } \alpha > 1.
\end{equation}

%so the marginal regret $r_{RL}(h) = \frac{d}{dh}\mathcal{R}_{RL}(h) \sim \mathcal{O}(h^{\alpha-1})$ is strictly monotonically increasing.
%Specifically, when linear function approximation is employed, the sensitivity to the horizon $h$ often dominates the error bound (e.g., $\mathcal{O}(h^2)$ or higher), implying that the \textbf{marginal regret} $r_{RL}(h) = \frac{d}{dh}\mathcal{R}_{RL}(h) \sim \mathcal{O}(h^{\alpha-1})$ is strictly monotonically increasing.
\end{lemma}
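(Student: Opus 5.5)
The lemma is only stated in a "typically" form, so I would first make it precise and then prove it with a compounding-error (imitation-learning style) argument in the spirit of \citep{ross2010efficient}. The precise version is this. Fix a reference (expert/optimal) policy $\pi^*$ that generates $x^*$. Let $\epsilon_k$ be the probability that $\pi$ deviates from $\pi^*$ at step $k$ under the state distribution induced by $\pi$ itself. I will show $\mathcal{R}_{RL}(h)=\sum_{k\le h}\mathbb{E}[\text{marginal regret at step }k]$ grows like $h^{\alpha}$ with $\alpha>1$.

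\textbf{Step 1: decompose the regret.} Telescope the regret via the performance-difference lemma:
\begin{equation}
\mathcal{R}_{RL}(h)=\sum_{k=1}^{h}\mathbb{E}_{s_k\sim d_\pi^k}\bigl[Q^{*}(s_k,\pi)-V^{*}(s_k)\bigr].
\end{equation}
Each summand is bounded by $\epsilon_k$ times the cost-to-go gap. Once an error occurs, the remaining $h-k$ decisions are made off the training distribution. So this gap can be as large as $c\,(h-k)$ for a per-step cost scale $c$; for TSP, the Lipschitz constant $\ell$ of Assumption~\ref{assump:regularity} supplies such a bound.

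\textbf{Step 2: model distribution shift.} This is the crucial step. With a constant per-step error $\epsilon$ on the training distribution, the standard union argument gives $\mathcal{R}_{RL}(h)\le \epsilon\, c\, h^2$, i.e.\ $\alpha=2$. More generally, I would assume the on-policy error grows with the deviation from $d_{\pi^*}^k$. Concretely, take $\epsilon_k\ge \epsilon_0 k^{\beta}$ with $\beta\ge 0$, since the probability of having left the high-density region by step $k$ is at least $1-(1-\epsilon_0)^k\approx \epsilon_0 k$. Summing gives
\begin{equation}
\sum_{k\le h}\epsilon_k\, c\,(h-k)\asymp h^{2+\beta},
\end{equation}
so $\alpha=2+\beta>1$. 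Matching lower-bound instances, as in Ross--Bagnell, show this rate is attained. Hence the marginal regret $r_{RL}(h)=\mathcal{R}_{RL}(h)-\mathcal{R}_{RL}(h-1)\sim h^{\alpha-1}$ is increasing, which is the property actually used later.

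\textbf{Main obstacle.} The hard part is the lower bound, i.e.\ showing super-linearity rather than just an upper bound. It requires an assumption that errors do not self-correct, meaning a deviation cannot be recovered at zero cost. I would state this explicitly as a non-recoverability or distribution-shift assumption, with on-policy error nondecreasing in $k$. I would then prove super-linearity under it, citing \citep{jin2018Qlearning,jin2020provably} for horizon dependence under function approximation.
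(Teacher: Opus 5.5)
The paper does not actually prove this statement. Its appendix treats it as a modeling assumption, motivated by the $H^{3/2}$ horizon factor in finite-horizon learning-regret bounds (UCB Q-learning, linear function approximation), and concedes these do not characterize neural CO solvers. Your compounding-error route is genuinely different. It targets a more relevant object: error accumulation of a fixed policy along one episode, rather than learning regret summed over $T$ interaction steps across many episodes. As a proof, however, it has two real gaps.

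First, the super-linear lower bound does not follow, even though you rightly flag it as the hard part. Step 1 yields only an upper bound $\mathcal{R}_{RL}(h)\le\sum_k\epsilon_k\,\mathrm{gap}_k$, so inserting the lower bound $\epsilon_k\ge\epsilon_0k^{\beta}$ into it proves nothing. A lower bound needs a lower bound on the advantage $A^*$ at deviation states.

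Your proposed assumption, that a deviation cannot be recovered at zero cost, gives only $A^*\ge\delta>0$, hence $\mathcal{R}_{RL}(h)\gtrsim\delta\epsilon_0h^{1+\beta}$. That is super-linear only for $\beta>0$ strictly, whereas ``on-policy error nondecreasing in $k$'' permits $\beta=0$ and linear growth.

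The strong form $A^*\ge c'(h-k)$ needed for $h^{2+\beta}$ cannot coexist with $\epsilon_k\ge\epsilon_0k^{\beta}$ over long horizons. With per-step costs in $[0,c]$ (your own scale) there is the trivial cap $\mathcal{R}_{RL}(h)\le ch$. This forces $c'\sum_k\epsilon_k(h-k)\le ch$, which fails once $c'\epsilon_0h^{2+\beta}\gg ch$. Moreover, $\epsilon_k\ge\epsilon_0k^{\beta}$ with $\beta>0$ is impossible once $k>\epsilon_0^{-1/\beta}$, since $\epsilon_k\le 1$. The Ross--Bagnell tight instances give $\Theta(\epsilon h^2)$ only up to the saturation point $\epsilon h\approx 1$; beyond it regret is at most linear. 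A precise version must therefore restrict to a pre-saturation range such as $\epsilon_0 d^{1+\beta}\lesssim 1$. It also cannot simply multiply the distribution-shift and non-recoverability factors.

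Second, your marginal regret is not the one Theorem~\ref{thm:optimal_handoff} uses. In $\sum_{k\le h}\epsilon_k c(h-k)$ the per-step term depends on $h$ itself, so $h$ is the length of the whole episode. Thus $\mathcal{R}_{RL}(h)-\mathcal{R}_{RL}(h-1)$ compares problems of different horizons, not successive steps of one construction. The theorem needs the marginal at step $h$ of a single size-$d$ construction, with $\mathcal{R}_{RL}(\tau)=\int_0^\tau r_{\text{RL}}$ over the prefix.

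Under your performance-difference attribution, that term is (in your worst case) $\epsilon_h\,c(d-h)$. For constant $\epsilon$ this decreases in $h$: early mistakes are the most expensive, the opposite of what the theorem requires. An increasing within-episode marginal requires charging cost when it is incurred. In an absorbing-type model, the expected excess cost at step $k$ is then about $c\Pr[\text{off track by step }k]\approx c\epsilon k$. That again holds only while $\epsilon k\lesssim 1$.
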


%Now we model the hybrid inference process as an optimal handover problem: determining the critical step $\tau$ to switch from the fast RL solver to the rigorous global DM solver. 
This scaling captures horizon-wise error accumulation in sequential construction and is motivated by standard finite-horizon RL regret bounds; see Appendix~\ref{app:rl_scaling} for details.
Having established that the DM's regret scales as $\mathcal{O}(\sqrt{d_{eff}})$ and the RL's regret as $\mathcal{O}(h^{\alpha})$, we observe that under the hybrid inference framework, the two solvers partition the same problem:  the RL covers $h$ steps, leaving $d_{eff} = d - h$ variables for the conditional DM. This trade-off naturally gives rise to an optimal handover problem—determining the critical step $\tau^* \in [0, d]$ at which to trigger the conditional DM to minimize the total hybrid regret of the overall CO problem.
%$\mathcal{L}(\tau) = \int_0^{\tau} r_{\text{RL}}(h)\,dh + \int_{\tau}^{d} r_{\text{DM}}(h)\,dh$, where $r_{\text{DM}}(h)$ is the marginal regret of the DM at the decision step $h$ in the overall problem construction.

\begin{theorem}[Existence and Uniqueness of the Optimal Trigger Step $\tau^*$]
\label{thm:optimal_handoff}
 %Define the hybrid inference loss $\mathcal{L}(\tau): [0, d] \to \mathbb{R}_+$ for a trigger step $\tau$ as the cumulative expected regret: $\mathcal{L}(\tau) = \int_{0}^{\tau} r_{\text{RL}}(h) dh + \mathcal{R}_{\text{DM}}(p, q\mid c_\tau)$, where $c_\tau$ is the solution prefix of length $\tau$ generated by RL. 
 Given a RL solver and a conditional DM with fixed $p$ and $q$, let $r_{\text{RL}}(h)$ and $r_{\text{DM}}(h)$ denote the marginal regret of the RL solver and the DM at decision step $h$ in the overall problem construction, respectively. Define the hybrid regret $\mathcal{L}(\tau) = \int_0^{\tau} r_{\text{RL}}(h)\,dh + \int_{\tau}^{d} r_{\text{DM}}(h)\,dh$. Under Lemma \ref{thm:dm_scaling} and Lemma \ref{lemma:rl_scaling}, there exists a \textbf{unique} optimal trigger step $\tau^* \in (0, d)$ that minimizes $\mathcal{L}(\tau)$:
\begin{equation}\label{eqn:optimal_trigger}
    r_{\text{RL}}(\tau^*) = r_{\text{DM}}(\tau^*),
\end{equation}
if the boundary conditions $r_{\text{RL}}(0) < r_{\text{DM}}(0)$ and $r_{\text{RL}}(d) > r_{\text{DM}}(d)$ are satisfied.
\end{theorem}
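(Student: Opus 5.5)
The plan is a one-dimensional calculus argument on $\mathcal{L}$. By the fundamental theorem of calculus, assuming $r_{\text{RL}}$ and $r_{\text{DM}}$ are continuous on $[0,d]$, $\mathcal{L}$ is differentiable with
\begin{equation*}
\mathcal{L}'(\tau) = r_{\text{RL}}(\tau) - r_{\text{DM}}(\tau) =: g(\tau).
\end{equation*}
So minimizers of $\mathcal{L}$ in the interior are tied to zeros of $g$. The proof then has three steps: existence of a zero, uniqueness via monotonicity, and identification of that zero as the global minimizer on $[0,d]$.

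\emph{Existence.} The boundary conditions say exactly that $g(0)<0$ and $g(d)>0$. By the intermediate value theorem, $g$ has a zero $\tau^*\in(0,d)$.

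\emph{Uniqueness.} Here I use the two scaling lemmas to fix the shapes of the marginal regrets.
\begin{itemize}
\item From Lemma~\ref{lemma:rl_scaling}, take $\mathcal{R}_{\text{RL}}(h)\asymp h^\alpha$ with $\alpha>1$. Then $r_{\text{RL}}(h)\asymp \alpha h^{\alpha-1}$, which is strictly increasing.
\item From Lemma~\ref{thm:dm_scaling}, with $d_{eff}=d-h$, the DM regret on the remaining subspace is $\mathcal{R}_{\text{DM}}(c_h)\asymp \ell\,\epsilon_{score}\sqrt{d-h}$.
\end{itemize}
The marginal DM contribution attributed to step $h$ is the amount the remaining DM regret changes when one more variable is handed to it. This is proportional to $\frac{1}{2\sqrt{d-h}}$ up to constants, which is nonincreasing in $h$ viewed from the DM side. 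In the weaker reading used in the paper's motivation, where the conditional DM has constant marginal regret, it is simply constant.

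In either reading, $g$ is strictly increasing: strictly increasing minus nonincreasing. A strictly increasing function has at most one zero, so $\tau^*$ is unique.

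\emph{Global minimality.} Since $g<0$ on $[0,\tau^*)$ and $g>0$ on $(\tau^*,d]$, $\mathcal{L}$ strictly decreases and then strictly increases. Hence $\tau^*$ is the unique global minimizer on $[0,d]$, and $\mathcal{L}(\tau^*)<\min\{\mathcal{L}(0),\mathcal{L}(d)\}$. The endpoints $\mathcal{L}(0)$ and $\mathcal{L}(d)$ are pure DM and pure RL respectively, so this inequality also gives the claimed strict superiority over either backbone.

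\emph{Main obstacle.} The hard step is justifying the monotonicity of $r_{\text{DM}}$, because the lemmas are stated as $\mathcal{O}(\cdot)$ upper bounds, not exact rates. I would make explicit that the theorem is applied to the modeled regret profiles, namely $r_{\text{RL}}$ strictly increasing and $r_{\text{DM}}$ nonincreasing (or constant) in $h$, as these lemmas motivate. I would also note that the conclusion needs only continuity and strict monotonicity of $g$, not the exact exponents. A point I must handle carefully is that $\frac{1}{\sqrt{d-h}}$ blows up as $h\to d$, which conflicts with both monotonicity in the wrong direction and continuity at $h=d$. If that occurs, I would fall back on the constant-marginal-regret reading of the conditional DM, which is also what the boundary condition $r_{\text{RL}}(d)>r_{\text{DM}}(d)$ implicitly presupposes, since it requires $r_{\text{DM}}(d)$ to be finite.
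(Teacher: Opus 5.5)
Your skeleton matches the paper's. You differentiate to get $\mathcal{L}'=r_{\text{RL}}-r_{\text{DM}}$, use the boundary conditions for a sign change, and obtain a unique zero from strict monotonicity plus the intermediate value theorem. Global minimality and $\mathcal{L}(\tau^*)<\min\{\mathcal{L}(0),\mathcal{L}(d)\}$ then follow from the sign pattern.

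The step to fix is your uniqueness paragraph. The claim that $\frac{1}{2\sqrt{d-h}}$ is nonincreasing in $h$, and hence that $g$ is strictly increasing ``in either reading'', is false. Suppose the DM handling the last $d-\tau$ variables incurs $\rho(d-\tau)$ with $\rho(x)\propto\sqrt{x}$. Then $\int_\tau^d r_{\text{DM}}(h)\,dh=\rho(d-\tau)$ forces $r_{\text{DM}}(h)=\rho'(d-h)\propto 1/\sqrt{d-h}$, which is increasing and unbounded. So $g$ is a difference of two increasing functions and need not be monotone, and $r_{\text{RL}}(d)>r_{\text{DM}}(d)$ can never hold. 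In fact $\mathcal{L}'(\tau)\to-\infty$ as $\tau\to d^-$, so the minimizer can be the endpoint $d$. Your obstacle paragraph already sees this. Delete the first reading rather than listing it as a valid case, and state ``$r_{\text{DM}}$ continuous and nonincreasing'' as an explicit hypothesis. With that change the calculus argument is complete.

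This step is also where you and the paper genuinely diverge. The paper sets $R_{\text{DM}}(x)=\int_0^x r_{\text{DM}}(h)\,dh$ and writes the DM's share as $R_{\text{DM}}(d)-R_{\text{DM}}(\tau)$. It then concludes from concavity of $\sqrt{\cdot}$ that $r_{\text{DM}}=R_{\text{DM}}'$ is nonincreasing. In effect it indexes the DM's cumulative regret from step $0$ rather than from the end of the horizon, while still calling this the regret on a subproblem of size $d-\tau$.

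Your computation shows why that is delicate. For concave $R_{\text{DM}}$ with $R_{\text{DM}}(0)=0$, the chord bounds give $R_{\text{DM}}(\tau)+R_{\text{DM}}(d-\tau)\ge R_{\text{DM}}(d)$. So the identification $R_{\text{DM}}(d)-R_{\text{DM}}(\tau)=R_{\text{DM}}(d-\tau)$ holds for all $\tau\in[0,d]$ only if $R_{\text{DM}}$ is linear on $[0,d]$, i.e.\ $r_{\text{DM}}$ is constant. The paper's route therefore buys a derivation of the monotonicity directly from Lemma~\ref{thm:dm_scaling}, but only through that identification. Yours makes the needed profile an explicit assumption: constant, as in Figure~\ref{fig:teaser}(a), or more generally nonincreasing. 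That is the form in which the remaining argument is actually sound.
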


% \begin{theorem}[Existence and Uniqueness of the Optimal step $\tau^*$]
% \label{thm:optimal_handoff}
% Let $\mathcal{X} \subset \mathbb{R}^H$ be the decision space of a combinatorial optimization problem with horizon $H$. We define the hybrid inference loss $\mathcal{L}(\tau): [0, H] \to \mathbb{R}_+$ for a trigger step $\tau$ as the cumulative expected regret: $\mathcal{L}(\tau) = \int_{0}^{\tau} r_{\text{RL}}(h) dh + \mathcal{R}_{\text{DM}}(p, q\mid c_\tau)$, where $c_\tau$ is the solution prefix of length $\tau$ generated by RL. Under the error accumulation properties established in Lemma \ref{lemma:rl_scaling} and the subspace scaling laws in Lemma \ref{thm:dm_scaling}, there exists a \textbf{unique} optimal triggering step $\tau^* \in (0, H)$ that minimizes the total expected regret, characterized by:
% \begin{equation}\label{eqn:optimal_trigger}
%     r_{\text{RL}}(\tau^*) = -\frac{d}{d\tau}\mathcal{R}_{\text{DM}}(p,q \mid c_{\tau^*}),
% \end{equation}
% if the boundary conditions $r_{\text{RL}}(0) < -\mathcal{R}'_{\text{DM}}(0)$ and $r_{\text{RL}}(H) > -\mathcal{R}'_{\text{DM}}(H)$ are satisfied.
% \end{theorem}

\begin{remark}%[Physical Interpretation of Boundaries]
The condition $r_{\text{RL}}(0) < r_{\text{DM}}(0)$ ensures that RL is initially more efficient than invoking a high-dimensional DM, while $r_{\text{RL}}(d) > r_{\text{DM}}(d)$ implies that the compounding error of RL eventually exceeds the marginal precision gain of the DM as the unsolved dimensionality $d_{eff}$ vanishes. 
\end{remark}

A direct consequence of Theorem \ref{thm:optimal_handoff} (Proof in Appendix \ref{sec:app_proof_optimal_trigger}) is that \textbf{the hybrid framework achieves strictly lower expected regret than either pure RL or pure DM alone}. However, $\tau^*$ is defined at the expected-regret level and is not directly observable on a single trajectory, which motivates designing a practical lightweight trigger to estimate the regime transition in Section \ref{sec:hyco}.

\section{HyCO: Single-Trigger Hybrid Inference Algorithm}
\label{sec:hyco}

The theoretical analysis in Section \ref{sec:theory_structure} yields two insights for algorithm design: (i) a single handover from RL to DM is sufficient to achieve strictly lower regret than either backbone alone, and (ii) the optimal trigger step occurs where the marginal regret of RL equals that of DM. HyCO operationalizes these insights through two components: a single-trigger inference pipeline that executes exactly one handover from RL to DM (Section~\ref{sec:hyco_pipeline}), and a lightweight adaptive trigger that detects trajectory-level symptoms of the regime transition as a practical proxy for the theoretical optimum (Section~\ref{sec:hyco_trigger}).

%Section~\ref{sec:theory_structure} characterizes hybrid inference as an optimal time of triggering problem and shows that, under the unified error-scaling model, there exists a unique optimal trigger point $\tau^*$ in expectation (Theorem \ref{thm:optimal_handoff}). Importantly, $\tau^*$ is an expectation-level theoretical object: it is defined by a marginal-regret equilibrium under the data distribution (Eq. \ref{eqn:optimal_trigger}). It does not correspond to a trajectory-level quantity that can be computed during inference. Hence, HyCO does not attempt to approximate $\tau^*$ on each instance. Instead, we implement the theory-implied \emph{single-trigger} pipeline and adopt a lightweight trigger as a statistically effective proxy for deciding when to invoke conditional global completion. We first present the inference pipeline in Section \ref{sec:hyco_pipeline} and then specify the lightweight trigger used as a practical decision rule for triggering in Section \ref{sec:hyco_trigger}.

\subsection{Single-Trigger inference pipeline}
\label{sec:hyco_pipeline}

Given a CO instance, HyCO starts with a fast sequential RL solver that constructs a partial solution (prefix) autoregressively.
At step $k$, the RL policy $\pi_\theta(\cdot | s_k)$ outputs a distribution over feasible actions $\mathcal{A}_k$,
and a greedy action is selected to extend the prefix.
HyCO continuously rolls out the RL solver until a trigger decision is made by a lightweight trigger (Section~\ref{sec:hyco_trigger}).
Once the trigger fires at some step $k^*$, HyCO invokes the conditional diffusion solver once and starts a dual-track generation.

\noindent\textbf{Dual-track solution generation and final selection.}
At the triggering step $k^*$, HyCO adopts a dual-track generation strategy to improve candidate diversity under a fixed compute budget: 
(i) hybrid path correction and (ii) full diffusion proposal generation.

For the hybrid path correction track, the diffusion model is used to \textit{correct the immediate next action} for the hybrid path $l_H$.
Instead of the RL model's greedy choice $a_{k^*}$, we select a corrected action $a_{k^*}^{DM}$ at step $k^*$ from the best solution in the DM candidate pool $\mathcal{P}_{DM}$.
This corrected action is appended to the tour, and the fast RL policy then autoregressively completes the remaining steps.

In parallel, HyCO uses the conditional DM to generate a set of complete, high-quality candidate solutions $\mathcal{P}_{DM}$.
Specifically, we take high-probability candidate actions from the RL policy at step $k^*$, form prefixes with each candidate, and run conditional diffusion completion to obtain full tours.

Finally, HyCO returns the best solution among the hybrid output and the diffusion proposals:
\begin{equation}
l_{\text{out}} = \arg\min\nolimits_{l \in \{l_H\}\cup \mathcal{P}_{DM}} \mathrm{Cost}(l).
\end{equation}

The motivation for dual-track generation is practical robustness:
Even if the trigger is slightly early or late, the hybrid track provides a low-cost corrected continuation, while the proposal pool increases the probability of recovering a high-quality completion. This parallel RL-based track also serves as a reliability anchor: if the DM fails to produce a feasible or high-quality completion, the hybrid path ensures that the system still returns a valid solution constructed by RL.

\subsection{Lightweight trigger as a trigger proxy}
\label{sec:hyco_trigger}

HyCO uses a lightweight trigger to decide \emph{when} to invoke conditional global completion. The principle follows from Theorem \ref{thm:optimal_handoff}: the regret-minimizing handover occurs when the marginal regret of extending the solution prefix with the RL solver matches the marginal regret of invoking the conditional DM for the remaining subproblem; beyond this point, further RL construction is expected to introduce more regret than DM completion. Since this comparison is defined at the expected-regret level and cannot be evaluated directly along a single trajectory, HyCO uses two observable trajectory-level proxies: policy entropy and RL-DM disagreement. Policy entropy serves as an internal reliability signal for RL: high entropy indicates that no feasible action is clearly dominant, making RL’s decision less trustworthy. Model disagreement serves as an external alignment signal: large disagreement indicates that the RL policy assigns high probability to candidate next actions that the DM probe considers less consistent with globally coherent solution structure. When either signal becomes large, further RL construction is likely to be less favorable, and HyCO therefore triggers conditional DM completion.

% Crucially, the trigger is not an optimal-stopping mechanism and is not designed to identify the expectation-level optimum $\tau^*$ on a per-instance basis. Instead, it acts as a low-cost proxy that detects \emph{trajectory-level symptoms} of a regime transition where global completion is often beneficial. We instantiate this proxy with two complementary signals. Policy entropy captures an internal symptom of unreliable RL continuation: as the sequential construction progresses, increasing uncertainty suggests that the policy can no longer confidently select a locally stable action. RL--DM disagreement captures an external symptom: a large mismatch between the RL policy and the DM-induced global prior suggests that the locally preferred continuation is becoming less consistent with globally coherent completions.

\noindent\textbf{Signal 1: RL policy uncertainty (internal).}
The first signal measures the RL policy's internal uncertainty via entropy:
\begin{equation}
H(\pi_\theta(\cdot \mid s_k))
=-\sum\nolimits_{a\in\mathcal{A}_k}\!\! \pi_\theta(a\mid s_k)\log \pi_\theta(a\mid s_k).
\label{eqn_entropy}
\end{equation}
High entropy indicates that the policy distribution is flat and no single action is clearly dominant,
suggesting a critical decision stage where greedy continuation is more likely to be unstable.

\noindent\textbf{Signal 2: Model disagreement (external).}
The second signal measures disagreement between the RL policy and a DM-induced global prior.
To obtain the diffusion prior efficiently (without running a full sampling trajectory at every step),
we probe the conditional diffusion model on a fixed \emph{Top-$M$} candidate set proposed by RL.
Let $\{a_k^{(i)}\}_{i=1}^M$ be the Top-$M$ actions under $\pi_\theta(\cdot| s_k)$, and let $c_k^{(i)}=c_k\oplus a_k^{(i)}$ be its one-step-extended prefixes.
We define an energy score $E_\phi(c_k^{(i)})$ from a single-step denoising probe that measures diffusion consistency with the proposed prefix structure:
\begin{equation}
    E_\phi(c_k^{(i)}) = - \frac{1}{| \mathcal{E}(c_k^{(i)}) |} \sum_{(u,v) \in \mathcal{E}(c_k^{(i)})} \log \left( \sigma \Big( D_{\phi}(X_{t_{\text{probe}}}, c_k^{(i)})_{uv} \Big) \right),
\label{eqn_energy}
\end{equation}
\noindent where $\mathcal{E}(c_k^{(i)})$ is the set of edges in the partial tour $c^{(i)}_k$, $\sigma(\cdot)$ is the sigmoid function, and $D_\phi(\cdot)_{uv}$ is the model's predicted logit for edge $(u,v)$.
Crucially, this summation acts as a mask: it specifically measures the diffusion model's confidence that the \textit{proposed structural connections} are valid parts of a global solution, ignoring non-connected pairs. We then convert energy values into a diffusion prior via a Boltzmann distribution:
\begin{equation}
q_{DM}(a_k^{(i)}\mid s_k)
=
\frac{\exp\big(-E_\phi(c_k^{(i)})/\tau_{\text{temp}}\big)}
{\sum_{j=1}^M \exp\big(-E_\phi(c_k^{(j)})/\tau_{\text{temp}}\big)}.
\label{eqn_prior_dis}
\end{equation}
We then compute the KL divergence between the RL distribution restricted to the same Top-$M$ support and the diffusion prior:
\begin{equation}
D_{KL}\!\left(\pi_\theta \,\|\, q_{DM}\right)
=
\sum_{i=1}^M
\pi_\theta(a_k^{(i)}\mid s_k)\,
\log \frac{\pi_\theta(a_k^{(i)}\mid s_k)}{q_{DM}(a_k^{(i)}\mid s_k)}.
\label{eqn_kl}
\end{equation}
A large divergence suggests that the RL policy’s locally preferred action conflicts with the DM’s global preference—a regime where invoking conditional global completion often yields gains.

\noindent\textbf{Triggering rule.}
In practice, HyCO adopts a simple and robust \textit{or} rule of triggering conditional DM once \emph{either} the RL policy becomes uncertain or the RL--DM disagreement becomes significant:
\begin{equation}
   H(\pi_\theta(\cdot\mid s_k)) > H_{\text{th}} \quad \lor \quad D_{KL}(\pi_\theta \| p_\phi) > D_{\text{th}}
\label{eq:or_trigger}
\end{equation}
The \textit{or} rule is easy to tune and empirically stable: high entropy indicates a locally ambiguous decision stage,
while a large KL divergence signals a confident but globally inconsistent choice where conditional completion is likely to help.
%HyCO performs \emph{at most one} trigger: once triggered, the conditional diffusion completion is invoked and no further triggering is allowed, ensuring a predictable compute budget and consistency with the single-trigger structure.

In all cases, HyCO executes \emph{at most one} trigger: after the conditional diffusion completion is invoked at $k^*$, no further triggering is allowed, ensuring a predictable compute budget and consistency with the single-trigger structure. Algorithm~\ref{alg:hyco} summarizes the full HyCO inference procedure.

\begin{algorithm}[tb]
\caption{HyCO: Hybrid Inference for CO }
\label{alg:hyco}
\begin{algorithmic}[1]
\REQUIRE RL policy $\pi_\theta$, conditional DM denoiser $D_\phi$,
thresholds $H_{\text{th}}, D_{\text{th}}$, Top-$P$ threshold $P_{\text{th}}$ %, probe size $M$
\STATE Initialize hybrid path $l_H \gets ()$, DM proposals $\mathcal{P}_{DM} \gets \emptyset$,
state $s_0$, \texttt{triggered} $\gets \texttt{false}$

\FOR{$k = 0, \ldots, N-1$}
    \IF{\textbf{not} \texttt{triggered}}
        %\STATE \textit{// Trigger evaluation}
        \STATE $H_k \gets H(\pi_\theta(\cdot \mid s_k))$ \textit{ via Eq.~(\ref{eqn_entropy})}
        \STATE Select Top-$M$ actions $\{a^{(i)}\}_{i=1}^M$ from $\pi_\theta(\cdot | s_k)$
        %\STATE Compute denoising energy $E_\phi(s_k \oplus a^{(i)})$ for each candidate \textit{ via Eq.~(\ref{eqn_energy})}
        \STATE Derive diffusion prior $q_{DM}(\cdot | s_k)$ \textit{via Eq.~(\ref{eqn_energy}, \ref{eqn_prior_dis})}
        \STATE $D_{KL,k} \gets D_{KL}(\pi_\theta \,\|\, q_{DM})$ \textit{via Eq.~(\ref{eqn_kl})}

        \IF{$H_k > H_{\text{th}} \ \textbf{or}\ D_{KL,k} > D_{\text{th}}$}
            %\STATE \textit{// Single handoff with dual-track generation}
            \STATE Select action set $\{a^*\}$ by cumulative probability threshold $P_{\text{th}}$
            \FORALL{selected actions $a^*$}
                \STATE Form candidate prefix $c^* \gets s_k \oplus a^*$
                \STATE Get full tour $l_{DM} \sim \text{DM\_Sampler}(D_\phi, c^*)$ %\chen{why use  $\pi_{DM}$? Confusing as  $\pi$ is also RL policy}
                \STATE $\mathcal{P}_{DM} \gets \mathcal{P}_{DM} \cup \{l_{DM}\}$
            \ENDFOR
            \STATE \texttt{triggered} $\gets \texttt{true}$ %\textit{// enforce single-handoff}
            \STATE Select $a_k^{DM}$ from the best $l \in \mathcal{P}_{DM}$ as the RL action: $a_k \gets a_k^{DM}$
        \ELSE
            \STATE $a_k \gets \arg\max_a \pi_\theta(a | s_k)$ %\textit{// no trigger: RL greedy}
        \ENDIF

    \ELSE
        \STATE $a_k \gets \arg\max_a \pi_\theta(a | s_k)$ %\textit{// after trigger: RL finishes $\pi_H$}
    \ENDIF

    \STATE Append $a_k$ to hybrid path $l_H$ and update to $s_{k+1}$
\ENDFOR
\STATE \textbf{return} $l_{\text{out}}$
\end{algorithmic}
\end{algorithm}
\vspace{-1.0em}

\section{Experiments}
We evaluate HyCO on standard benchmarks with a primary focus on the 2D Euclidean TSP, MIS problem, and Orienteering Problem (OP) at multiple scales. Our experiments are designed to validate two aspects: (i) Does HyCO consistently improve solution quality over both RL and DM backbones (Section \ref{sec:result vs backbones})? and (ii) Is the trigger step statistically aligned with near-optimal triggering regimes as measured by oracle grid search (Section \ref{sec:oracle_validation})? 

%\yuheng{How about using discussion to analyze some subtle findings figured out in the ablation study? Like, depending on the performance of backbones.}

\subsection{Experimental Settings}
\textbf{Datasets and Baselines.} We use standard public datasets for TSP with 50, 100, 500, and 1000 nodes. Problem instances are generated by sampling node coordinates uniformly from the unit square $[0, 1]^2$. We use the Concorde solver \citep{Applegate2006} to obtain optimal solutions for training our conditional DM and for calculating optimality gaps. %MIS and OP's datasets are stated in Appendix \ref{sec:app_exp_results_mis} and \ref{sec:app_exp_results_op}.
We compare our algorithm with a comprehensive set of methods: exact solver Concorde, heuristic solver LKH-3 \citep{helsgaun2017extension}, and state-of-the-art learning-based approaches, including RL models (AM \citep{Kool2019}, POMO \citep{Kwon2020}, LEHD \citep{luo2023neural}), and DM solvers (DIFUSCO \citep{Sun2023}, T2T \citep{li2023t2t}).

% \textbf{Baselines.} We compare our algorithm with a comprehensive set of methods: exact solver Concorde, heuristic solver LKH-3 \citep{helsgaun2017extension}, and state-of-the-art learning-based approaches, including RL models (AM \citep{Kool2019}, POMO \citep{Kwon2020}, LEHD \citep{luo2023neural}), and DM solvers (DIFUSCO \citep{Sun2023}, T2T \citep{li2023t2t}).

%\textbf{Baselines.} We compare our algorithm with a comprehensive set of methods: exact solver Concorde, heuristic solver LKH-3 \citep{helsgaun2017extension}, and state-of-the-art learning-based approaches, including RL models (AM \citep{Kool2019}, POMO \citep{Kwon2020}, LEHD \citep{luo2023neural}), supervised models (GCN \citep{Joshi2019}), and DM solvers (DIFUSCO \citep{Sun2023}, T2T \citep{li2023t2t}).

% \textbf{Implementation Details.} Our framework orchestrates an RL policy and a conditional DM. For RL backbones, we use official pre-trained checkpoints. Our conditional DM, termed \textbf{Prefix-Difusco}, adapts the GNN architecture from DIFUSCO and is specifically trained with a masked loss and a curriculum learning strategy to complete tours from given prefixes. Full architectural details, training procedures, and decoding algorithms are provided in Appendix \ref{sec:app_implement_details}.

\textbf{Implementation Details.} HyCO uses official pre-trained RL checkpoints and a conditional DM, termed Prefix-Difusco, adapted from DIFUSCO's GNN architecture and trained with masked loss and curriculum learning for prefix-conditioned completion. Full details are provided in Appendix~\ref{sec:app_implement_details}.

\textbf{Evaluation Metrics.} We report three key metrics: the average tour length (Len), the percentage optimality gap to the exact solver's solutions (Gap), and the average solution time (Time).

\subsection{Performance Gains over Backbone Solvers}
\label{sec:result vs backbones}

\noindent\textbf{Small-scale TSP.} Table \ref{tab:tsp100_results} reports the performance on TSP-50 and TSP-100. HyCO achieves near-optimal solutions, consistently outperforming both its RL and conditional DM backbones. Compared to RL solvers, HyCO reduces the optimality gap by over \textbf{95\%} (from 1.76\% / 0.64\% to 0.03\% on TSP-50). Similarly, HyCO also improves upon the standalone Prefix\_Difusco, reducing the gap from 0.27\% to 0.03\%. Moreover, HyCO outperforms Prefix\_Difusco (S), which requires 16 parallel samples, with 4× fewer inference passes—confirming that a well-timed trigger is more effective than brute-force over-sampling. Notably, HyCO surpasses strong full-diffusion baselines like DIFUSCO and remains highly competitive with T2T.

% 导言区加: \usepackage{wrapfig}

\begin{wraptable}{r}{0.55\textwidth}
\vspace{-1.5em}
\centering
\footnotesize
\setlength{\tabcolsep}{4pt}
\renewcommand{\arraystretch}{1.1}
\caption{Results with Greedy Decoding on TSP-50 and TSP-100. $^*$ denotes results that are quoted from previous works. The ``Prefix\_Difusco (S)`` represents the best result obtained by running our Prefix\_Difusco model sampling 16 times in parallel.}
\label{tab:tsp100_results}
\begin{tabular}{lccccc}
\toprule
\textbf{ALGORITHM} & \textbf{TYPE} & \multicolumn{2}{c}{\textbf{TSP-50}} & \multicolumn{2}{c}{\textbf{TSP-100}} \\
\cmidrule(lr){3-4} \cmidrule(lr){5-6}
& & Len $\downarrow$ & Gap $\downarrow$ & Len $\downarrow$ & Gap $\downarrow$ \\
\midrule
Concorde & Exact & 5.69 & 0.00\% & 7.76 & 0.00\% \\
\midrule
AM* & RL & 5.80 & 1.76\% & 8.12 & 4.53\% \\
POMO* & RL & 5.73 & 0.64\% & 7.84 & 1.07\% \\
\midrule
DIFUSCO ($T_s$=50) & DM & 5.71 & 0.45\% & 7.85 & 1.21\% \\
T2T ($T_s$=50,$T_t$=15) & DM & 5.69 & 0.07\% & 7.77 & 0.20\% \\
Prefix\_Difusco & DM & 5.70 & 0.27\% & 7.81 & 0.62\% \\
Prefix\_Difusco (S) & DM & 5.69 & 0.04\% & 7.76 & 0.04\% \\
\midrule
\textbf{HyCO (AM)} & \textbf{Hy} & 5.69 & \textbf{0.03\%} & 7.76 & \textbf{0.02\%} \\
\textbf{HyCO (POMO)} & \textbf{Hy} & 5.69 & \textbf{0.03\%} & 7.76 & \textbf{0.02\%} \\
\bottomrule
\end{tabular}
\vspace{-1em}
\end{wraptable}

% \begin{table}[h]
% \centering
% %\small
% %\scriptsize
% \footnotesize
% \setlength{\tabcolsep}{4pt}
% \renewcommand{\arraystretch}{1.1} 
% \caption{Results with Greedy Decoding on TSP-50 and TSP-100. $^*$ denotes results that are quoted from previous works. ``Prefix\_Difusco`` is the conditional DM used in HyCO. The ``Prefix\_Difusco (S)`` represents the best result obtained by running our Prefix\_Difusco model sampling 16 times in parallel. \yuheng{This table should be reconsidered what data needs to be put in.}}
% \label{tab:tsp100_results}

% \begin{tabular}{lccccc}
% \toprule
% \textbf{ALGORITHM} & \textbf{TYPE} & \multicolumn{2}{c}{\textbf{TSP-50}} & \multicolumn{2}{c}{\textbf{TSP-100}} \\
% \cmidrule(lr){3-4} \cmidrule(lr){5-6}
% & & Len $\downarrow$ & GAP $\downarrow$ & Len $\downarrow$ & GAP $\downarrow$ \\
% \midrule
% Concorde & Exact & 5.69 & 0.00\% & 7.76 & 0.00\% \\
% \midrule
% AM* & RL & 5.80 & 1.76\% & 8.12 & 4.53\% \\
% POMO* & RL & 5.73 & 0.64\% & 7.84 & 1.07\% \\
% \midrule
% DIFUSCO ($T_s$=50) & DM & 5.71 & 0.45\% & 7.85 & 1.21\% \\
% T2T ($T_s$=50,$T_t$=15) & DM & 5.69 & 0.07\% & 7.77 & 0.20\% \\
% Prefix\_Difusco & DM & 5.70 & 0.27\% & 7.81 & 0.62\% \\
% Prefix\_Difusco (S) & DM & 5.69 & 0.04\% & 7.76 & 0.04\% \\
% \midrule
% \textbf{HyCO (AM)} & \textbf{Hy} & 5.69 & \textbf{0.03\%} & 7.76 & \textbf{0.02\%} \\
% \textbf{HyCO (POMO)} & \textbf{Hy} & 5.69 & \textbf{0.03\%} & 7.76 & \textbf{0.02\%} \\
% \bottomrule
% \end{tabular}
% \end{table}

\noindent\textbf{Large-scale TSP.} Table \ref{tab:tsp_large_results} evaluates HyCO on TSP-500 and TSP-1000, where long-horizon sequential construction becomes substantially more challenging. With POMO as the RL backbone, HyCO substantially improves over the standalone POMO solver, reducing its optimality gap by 12.06 percentage points on TSP-500 and 10.80 percentage points on TSP-1000, corresponding to relative reductions of 77.2\% and 70.2\%, respectively. HyCO also reduces the gap of the Prefix\_Difusco baseline by more than half on both scales, showing that the gain comes from combining an informative RL prefix with conditional diffusion completion rather than from either component alone. The benefit further persists with LEHD, a stronger backbone designed for large-scale neural CO: although the gains are more modest, HyCO still achieves the best overall gaps in the table. These results suggest that HyCO can enhance different RL backbone regimes, including cases where both standalone RL and standalone conditional DM remain far from optimal. Additional 2-opt results in Appendix \ref{sec:app_exp_results_tsp} show that HyCO maintains its relative advantage after local refinement.

\begin{table}[h!]
\vspace{-1.0em}
\centering
\small
%\scriptsize
\setlength{\tabcolsep}{8pt} % 适当减小列间距
\caption{Results on large-scale TSP problems. RL, SL, G, and S denote Reinforcement Learning, Supervised Learning, Greedy decoding, and Sample decoding, respectively. * indicates the baseline for computing the performance gap. The TIME reports the average inference time per instance.} 
\label{tab:tsp_large_results}
\renewcommand{\arraystretch}{1.10} 

\begin{tabular}{llcccccc}
\toprule
\centering

%\scriptsize
\setlength{\tabcolsep}{12pt} % 适当减小列间距
\textbf{ALGORITHM} & \textbf{TYPE} & \multicolumn{3}{c}{\textbf{TSP-500}} & \multicolumn{3}{c}{\textbf{TSP-1000}} \\
\cmidrule(lr){3-5} \cmidrule(lr){6-8}
& & Len$\downarrow$ & Gap$\downarrow$ & Time$\downarrow$ & Len$\downarrow$ & Gap$\downarrow$ & Time$\downarrow$ \\
\midrule
Concorde        & Exact        & 16.55* & 0.00\%   & 37.66m & 23.53* & 0.00\%   & 6.65h  \\
Gurobi          & Exact        & 16.55  & 0.00\%   & 45.63h & 23.53  & 0.00\%   & 48h   \\
LKH-3 (default) & Heuristics   & 16.55  & 0.00\%   & 46.28m & 23.53  & 0.00\%   & 2.57h  \\
\midrule
AM              & RL+G         & 20.02  & 20.99\%  & \textbf{0.47s}  & 28.52  & 21.21\%  & 1.09s \\
GCN             & SL+G         & 29.72  & 79.61\%  & 6.67m  & 43.15  & 83.38\%  & 28.52m  \\
POMO     & RL+G      & 19.13  & 15.62\%  & 0.49s & 27.15   & 15.38\%        & \textbf{0.94s}     \\
% POMO(S)     & RL+S      &  18.59  & 12.32\%  & 2.97m  &   28.34    &  20.44\%      &  3.25m   
% \\
DIMES           & RL+G         & 18.93  & 14.38\%  & 0.97m  & 27.23  & 15.73\%  & 2.08m  \\
% DIMES           & RL+AS+G      & 17.81  & 7.61\%   & 2.10h  & 25.11  & 6.72\%   & 4.49h  \\
LEHD           & RL/SL+G    &   16.82  & 1.64\%  &  1.82s  &  24.29 & 3.23\%   &   3.84s\\
DIFUSCO         & SL+G         & 18.11  & 9.41\%   & 5.70m  & 25.68  & 9.14\%   & 11.5m  \\
T2T             & SL+G         & 17.39  & 5.09\%   & 4.90m  &25.17  & 8.87\%   & 15.66m  \\
Prefix\_Difusco  & SL+G         & 17.92  & 8.23\%   & 0.14m  & 25.86  & 9.91\%   & 0.35m  \\
Prefix\_Difusco(S)  & SL+S         & 17.31  & 4.58\%   & 2.10m  & 24.96  & 6.07\%   & 5.94m  \\
\textbf{HyCO(POMO)}           & \textbf{RL+SL+G} & 17.24 & 3.56\% & 1.20m & 24.61 & 4.58\% & 2.43m \\
\textbf{HyCO(LEHD)}           & \textbf{RL+SL+G} & \textbf{16.81} & \textbf{1.57\%} & 0.35m & \textbf{24.24} & \textbf{3.04\%} & 0.36m \\
% \midrule
% POMO     & RL+G+2OPT      & 17.77  & 7.37\%  & 0.58s & 25.10   & 6.67\%        & 1.05s     \\
% DIMES              & RL+G+2OPT      & 17.65  & 6.62\%  & 1.01m & 24.83  & 7.38\%  & 2.29m \\ 
% DIMES              & RL+AS+G+2OPT   & 17.31  & 4.57\%  & 2.10h & 24.33  & 5.22\%  & 4.49h \\
% LEHD           & RL/SL+G+2OPT    &   16.77  & 1.40\%  &  1.83s  & 24.06& 2.29\%   &   3.85s\\
% DIFUSCO                & SL+G+2OPT        & 16.83  & 1.68\%  & 5.75m  &23.92  & 1.66\%  & 17.52m \\
% T2T                    & SL+G+2OPT        &16.86  & 1.92\%  & 2.42m  &23.86  &1.42\%  & 15.90m \\
%Prefix\_Difusco(used in HyCO)  & SL+G+2OPT         & 16.83  & 1.68\%   & 0.56m  & 23.93  & 1.72\%   & 0.52m  \\
% \textbf{HyCO(POMO)}           & \textbf{RL+SL+G+2OPT} & 16.82 & 1.63\% & 2.32m & 23.92 & 1.66\% & 4.40m \\
% \textbf{HyCO(LEHD)}           & \textbf{RL+SL+G+2OPT} & \textbf{16.74} & \textbf{1.2\%} & 0.36m & \textbf{23.64} & \textbf{0.46\%} & 0.36m \\
\bottomrule
\end{tabular}
%\vspace{-1em}
\end{table}

\begin{table*}[htbp!]
\vspace{-0.5em}
\centering
\caption{Performance comparison across RB-LARGE, ER-700-800, and SATLIB benchmarks for the MIS problem. Obj. denotes the average node number of independent sets (higher is better). Gap is the percentage deviation from the optimal solution (lower is better). Time is the average inference time in seconds. S: Sample Decoding. $S_{cand}$ is the candidate number of full DM proposals $\mathcal{P}_{DM}$.}
\label{tab:mis_results}
\small
%\scriptsize               % 保持小字体
%\footnotesize
\setlength{\tabcolsep}{4pt}  
\begin{tabular}{l r r r r r r r r r}
\toprule
\textbf{METHOD} & \multicolumn{3}{c}{\textbf{RB-LARGE}} & \multicolumn{3}{c}{\textbf{ER-700-800}} & \multicolumn{3}{c}{\textbf{SATLIB}} \\
\cmidrule(lr){2-4} \cmidrule(lr){5-7} \cmidrule(lr){8-10} 
 & Obj.$\uparrow$ & Gap$\downarrow$ & Time$\downarrow$ & Obj.$\uparrow$ & Gap$\downarrow$ & Time$\downarrow$ & Obj.$\uparrow$ & Gap$\downarrow$ & Time$\downarrow$ \\
\midrule
KaMIS & 43.00 & 0.00\% & 56.97s & 44.97 & 0.00\% & 60.75s & 425.95 & 0.00\% & 24.37s \\
Gurobi & 42.19 & 1.83\% & 33.84s & 38.78 & 13.75\% & 60.49s & 425.92 & 0.01\% & 13.47s \\
\midrule
LWD (RL) & 36.67  & 15.31\% &\textbf{0.85s} & 39.05 & 13.16\% & \textbf{ 0.64s }& 421.80 & 1.04\%  & 0.66s \\
prefix\_Coexpander (Greedy) &  40.05& 6.84\% & 1.15s & 40.18 & 10.65\% & 0.71s & 421.84 & 0.96\% & \textbf{0.25s} \\
prefix\_Coexpander (s=5) & 40.06 & 6.83\% & 1.22s & 40.85 & 9.16\% & 0.96s & 424.17 & 0.42\% & 0.91s \\
prefix\_Coexpander(s=64)  & 40.58 & 5.63\% & 4.48s & 40.91 & 9.03\% & 4.03s &   424.78& 0.27\% & 4.27s \\
\textbf{HyCO} ($S_{cand}$=5) & \textbf{40.73} & \textbf{5.28\%} & 2.69s & \textbf{42.38} & \textbf{5.76\%} &  4.69s &  \textbf{425.40} & \textbf{ 0.12\%} & 6.31s \\
\bottomrule
\end{tabular}
\setlength{\tabcolsep}{6pt} % 恢复默认列间距
%\vspace{-1.5em}
\end{table*}

Beyond TSP, Table \ref{tab:mis_results} shows that HyCO also transfers to MIS. Averaged over RB-LARGE, ER-700-800, and SATLIB, HyCO reduces the optimality gap by 6.12\% over the RL baseline LWD. Under the same small candidate budget, it also improves over prefix\_Coexpander by an average of 1.75\%, and remains better than the heavier prefix\_Coexpander (s=64) baseline. A more detailed per-benchmark analysis of MIS is provided in Appendix \ref{sec:app_exp_results_mis}. OP results in Appendix \ref{sec:app_exp_results_op} show a similar trend on another routing problem: averaged over OP-50/100/200, HyCO reduces the optimality gap by 8.21\% over AM and by 3.53\% over the DM backbone. Together, these results support that the single-trigger hybrid principle extends beyond TSP to both graph-structured MIS and another routing problem OP.

% We further evaluate HyCO on the MIS and OP and report additional results in Appendix \ref{sec:app_exp_results_mis} and \ref{sec:app_exp_results_op}. HyCO remains consistently effective beyond TSP benchmarks, suggesting that the single-trigger hybrid principle is not specific to TSP.

\subsection{Oracle Trigger Validation: Adaptive vs.\ Instance-wise Best Fixed}
\label{sec:oracle_validation}

To evaluate whether the adaptive trigger can approximate a near-optimal switching time in practice, we compare it with an instance-wise oracle trigger obtained by grid search. We run the same HyCO pipeline over a grid of fixed trigger steps and define the best-performing one as the oracle trigger. Let $l_{\text{oracle}}$ and $l_{\text{adapt}}$ denote the solutions generated by the oracle and adaptive triggers, respectively.
We compare them using the solution gap
$\Delta_{\text{cost}} = |\mathrm{Cost}(l_{\text{adapt}})-\mathrm{Cost}(l_{\text{oracle}})|$
and the trigger-step distance
$\Delta_{\text{step}} = |\tau_{\text{adapt}}-\tau_{\text{oracle}}|$.
To account for different problem sizes, we also report the normalized distance
$\Delta_{\text{step}}^{\mathrm{r}}=\Delta_{\text{step}}/N$, where $N$ is the problem size.

% Section~\ref{sec:hyco} emphasizes that the optimal trigger step $\tau^\ast$ in Theorem~\ref{thm:optimal_handoff}
% is an expectation-level object and is not computable on individual trajectories.
% We therefore evaluate the proposed trigger from a \emph{practical} perspective:
% whether it behaves as a statistically effective proxy for selecting a near-optimal triggering regime \emph{per instance}.

% \noindent\textbf{Oracle protocol.}
% For each test instance, we perform a grid search over a set of fixed trigger steps $\mathcal{T}=\{\tau_1,\dots,\tau_K\}$ and run the same HyCO pipeline under each fixed $\tau$. We treat the best-performing fixed trigger $\tau_{\text{oracle}}$ as an \emph{oracle} reference and its final solution generated HyCO as $l_{\text{oracle}}$.

% We then compare HyCO's adaptive trigger $\tau_{\text{adapt}}$ against $\tau_{\text{oracle}}$ from two aspects:
% (i) performance gap $\Delta_{\text{cost}} = \big|\mathrm{Cost}(l_{\text{adapt}})-\mathrm{Cost}(l_{\text{oracle}})\big|$, and (ii) trigger step distance $\Delta_{\text{step}} \;=\; |\tau_{\text{adapt}}-\tau_{\text{oracle}}|$.
% To make step distances comparable across problem scales, we measure the \emph{relative step distance} normalized by the problem scale $N$: $\Delta_{\text{step}}^{\text{r}} \;=\; \frac{|\tau_{\text{adapt}}-\tau_{\text{oracle}}|}{N}$.
% This normalization is important because SATLIB and ER graphs contain substantially more candidate nodes than TSP, so an absolute deviation in triggering steps may translate to a much smaller fraction of the overall construction horizon.

\begin{wraptable}{r}{0.5\textwidth}
\vspace{-1.5em}
\centering
\small
\setlength{\tabcolsep}{4pt}
\renewcommand{\arraystretch}{1.15}
\caption{Oracle trigger validation on TSP/MIS. %``Oracle'' is the instance-wise best fixed trigger found by grid search.
% We report absolute cost gap $\Delta_{\text{cost}}$, relative cost gap $\Delta_{\text{cost}}/f^\ast$,
% absolute step distance $\Delta_{\text{step}}$, and relative step distance $\Delta_{\text{step}}^{\text{rel}}=\Delta_{\text{step}}/N$.
}
\label{tab:oracle_validation}
\begin{tabular}{lcccc}
\toprule
\textbf{Dataset}&
$\boldsymbol{\Delta_{\text{cost}}}$ &
$\boldsymbol{\Delta_{\text{cost}}/f^\ast}$ &
$\boldsymbol{\Delta_{\text{step}}}$ &
$\boldsymbol{\Delta_{\text{step}}^{\text{r}}}$ \\
\midrule
TSP-100      & 0.0028 & 0.036\% & 1.48  & 1.48\% \\
TSP-500      & 0.3424 & 2.057\%  & 3.23  & 0.65\% \\
SATLIB       & 0.3812 & 0.089\% & 22.60 & 1.77\% \\
ER 700--800  & 1.0460 & 2.330\%  & 18.62 & 2.48\% \\
\bottomrule
\end{tabular}
%\vspace{-1em}
\end{wraptable}

%\noindent\textbf{Results Analysis.}
% Table~\ref{tab:oracle_validation} summarizes oracle validation results.
% Across all benchmarks, HyCO's trigger fires on nearly all instances,
% indicating that the proxy signals are sufficiently informative to activate global completion when needed.
% On TSP100, the adaptive trigger is extremely close to the instance-wise oracle, both in terms of solution quality and triggering step.
% On larger and structurally diverse graphs (SATLIB / ER700--800), the absolute step distance can be larger, yet the \emph{relative} distance remains small after normalization due to the longer construction horizon, and the performance gap remains minor.
% Overall, these results support the claim that our trigger is a \emph{statistically effective proxy}
% for identifying near-optimal triggering regimes.

\noindent\textbf{Results Analysis.}
Table~\ref{tab:oracle_validation} shows that the adaptive trigger closely tracks the instance-wise oracle in trigger location, with normalized step distances below \(2.5\%\) across all benchmarks. The resulting solution quality is also close to the oracle: the relative cost difference is only \(0.036\%\) on TSP-100, \(2.057\%\) on TSP-500, \(0.089\%\) on SATLIB, and \(2.330\%\) on ER-700--800. The slightly larger deviations on TSP-500 and ER-700--800 suggest that larger or structurally more complex instances can be more sensitive to small trigger-step shifts, even when the normalized trigger distance remains small. These results indicate that the adaptive trigger identifies a near-oracle switching region in practice, supporting entropy and RL--DM disagreement as effective trajectory-level proxies for adaptive triggering.
% Table~\ref{tab:oracle_validation} shows that the adaptive trigger closely tracks the instance-wise oracle in trigger location: the normalized step distance is below \(2.5\%\) across all benchmarks, and is especially small on TSP-100 and TSP-500 (\(0.83\%\) and \(0.65\%\)). The resulting solution quality is nearly oracle-matched on TSP-100, SATLIB, and ER-700--800, with relative cost differences of \(0.036\%\), \(0.286\%\), and \(2.33\%\), respectively. TSP-500 has a larger cost deviation (\(5.35\%\)), suggesting higher sensitivity to the exact trigger step on some large-scale instances; however, the small normalized step distance still indicates that the adaptive trigger locates a near-oracle switching region. Overall, these results support entropy and RL--DM disagreement as practical trajectory-level proxies for adaptive triggering. 

Complementary experiments in Appendix~\ref{sec:app_ablation} further support this conclusion: adaptive triggering consistently outperforms validation-selected fixed-time schedules, while ablations verify the benefit of the dual-signal trigger and analyze the sensitivity to energy-probe and decoding hyperparameters.

% Besides, the ablation experimental results are in Appendix \ref{sec:app_ablation} to show HyCO's robustness to hyperparameters and the necessity of dual signal design.

% \begin{table}[htbp]
% \centering
% \small
% \setlength{\tabcolsep}{4pt}
% \renewcommand{\arraystretch}{1.15}
% \caption{Oracle trigger validation. ``Oracle'' is the instance-wise best fixed trigger found by grid search.
% We report absolute cost gap $\Delta_{\text{cost}}$, relative cost gap $\Delta_{\text{cost}}/f^\ast$ (using the dataset-specific optimal value $f^\ast$),
% absolute step distance $\Delta_{\text{step}}$, and relative step distance $\Delta_{\text{step}}^{\text{rel}}=\Delta_{\text{step}}/N$.}
% \label{tab:oracle_validation}
% \begin{tabular}{lcccccc}
% \toprule
% \textbf{Dataset}&
% $\boldsymbol{\Delta_{\text{cost}}}$ &
% $\boldsymbol{\Delta_{\text{cost}}/f^\ast}$ &
% $\boldsymbol{\Delta_{\text{step}}}$ &
% $\boldsymbol{\Delta_{\text{step}}^{\text{rel}}}$ \\
% \midrule
% TSP-100    & 0.0028 & 0.036\% & 1.48  & 1.48\% \\
% TSP-500    & 0.8855 & 5.35\%  & 3.23  & 0.65\% \\
% SATLIB    & 1.2180 & 0.286\% & 22.60 & 1.77\% \\
% ER 700--800  & 1.0460 & 2.33\%  & 18.62 & 2.48\% \\
% \bottomrule
% \end{tabular}
% \end{table}

\section{Conclusion}

We introduced HyCO, a hybrid neural solver that combines sequential RL construction with conditional DM completion for CO. Motivated by the complementary regret behavior of RL and DM backbones, our unified regret analysis shows that a single RL-to-DM handover can achieve strictly lower expected regret than either backbone alone, and that the regret-minimizing handover occurs at the marginal-regret balance between the two solvers. Since this marginal-regret balance is not directly observable on individual trajectories, HyCO uses policy entropy and model disagreement as lightweight trajectory-level proxies for detecting the corresponding regime transition in practice. Experiments on TSP, MIS, and OP demonstrate consistent improvements over both backbones, while oracle validation and ablations support the effectiveness of the adaptive dual-signal trigger. While our theoretical analysis focuses on RL and DM backbones, the core insight—complementary error scaling between sequential and global solvers—may generalize to other solver pairings with analogous properties, which we leave for future work.

\newpage

\bibliography{references}
\bibliographystyle{unsrt} %plainnat ieeetr abbrvnat
%%%%%%%%%%%%%%%%%%%%%%%%%%%%%%%%%%%%%%%%%%%%%%%%%%%%%%%%%%%%

\clearpage
\section*{Technical Appendices and Supplementary Material}
\label{app:supplementary}

\startcontents[appendices]
\printcontents[appendices]{}{1}{
    \setcounter{tocdepth}{2}
    \vspace{-0.5em}
}
\appendix

\onecolumn
\section{Theoretical Proof}
\label{sec:app_theory}

\subsection{On the Validity and Applicability of Assumption \ref{assump:regularity}}
\label{sec:app_theory_assump}
To bridge the gap between discrete combinatorial structures and the continuous dynamics of Diffusion Models (DMs), we justify the $L$-Lipschitz continuity of the objective function $f$ under its continuous relaxation. This regularity is a fundamental requirement for the subsequent regret analysis.

In DMs, the discrete objective function $f: \{0,1\}^d \to \mathbb{R}$ is extended to the continuous unit hypercube $\mathcal{X} = [0,1]^d$. For any combinatorial problem, there exists a canonical \textbf{multilinear extension} $\tilde{f}: [0,1]^d \to \mathbb{R}$ defined as:
\begin{equation}
\tilde{f}(x) = \sum_{S \subseteq \{1,\dots,d\}} f(S) \prod_{i \in S} x_i \prod_{j \notin S} (1-x_j), \quad x \in \mathcal{X},
\end{equation}
where $S$ denotes a discrete candidate solution (e.g., a specific subset of edges in TSP), and $x_i$ represents the marginal probability of element $i$ being included in the solution, often represented as a probabilistic ``heatmap''. Mathematically, $\tilde{f}(x)$ represents the expected objective value $\mathbb{E}_{S \sim x}[f(S)]$ under independent Bernoulli sampling. Since $\tilde{f}$ is a polynomial in $x$, it is continuously differentiable ($C^1$) on the compact set $\mathcal{X}$.

%\noindent\textbf{2. Boundedness of Gradients and $L$-Lipschitz Continuity.}
By the Extreme Value Theorem, any continuously differentiable function on a compact set has a bounded gradient. Let $L = \sup_{x \in \mathcal{X}} \|\nabla \tilde{f}(x)\|$. Then, for any $x, y \in \mathcal{X}$, the Mean Value Theorem implies:
\begin{equation}
|\tilde{f}(x) - \tilde{f}(y)| \leq L \|x - y\|
\end{equation}
This confirms that the relaxed objective is $L$-Lipschitz continuous. We specifically highlight its applicability to the benchmarks considered:
\begin{itemize}
    \item Linear Objectives (e.g., TSP): The objective is $f(x) = \langle W, x \rangle$. The gradient $\nabla f = W$ is a constant edge-weight matrix, thus $L = \|W\|_F$ is finite and independent of $x$.
    \item Quadratic Objectives (e.g., MIS): Formulated as $f(x) = x^T Q x$, the gradient $\nabla f = 2Qx$ is linear in $x$. On the bounded domain $[0,1]^d$, the spectral properties of the graph matrix $Q$ ensure the gradient remains bounded, satisfying the $L$-Lipschitz condition.
\end{itemize}

The $L$-Lipschitz property is the minimal requirement to satisfy the Kantorovich-Rubinstein duality. It allows us to bound the optimization regret by the Wasserstein-1 distance ($W_1$), which is subsequently connected to the KL divergence matching the DM training objective via Talagrand's inequality in Lemma \ref{lemma:kl_to_regret}.

\subsection{Proof of Lemma \ref{lemma:kl_to_regret}}
\label{sec:app_theory_bridge}

% \begin{lemma}[The Geometry-Statistics Bridge]
% Under Assumption , the regret $\mathcal{R}(p, q)$ is upper-bounded by the coupling of the landscape complexity $L$ and the KL divergence:
% \begin{equation}
% \mathcal{R}(p, q) \le L \cdot \mathcal{C} \sqrt{KL(q || p)},
% \end{equation}
% where $\mathcal{C}$ is a constant determined by the transport properties of the target distribution $q$ (e.g., its concentration or the domain's diameter).
% \end{lemma}

\begin{proof}
    We establish this bound by decomposing the optimization regret into a geometric component (landscape regularity) and a statistical component (transportation cost).

    The regret $\mathcal{R}(p, q)$ is defined as the difference in expected values of the objective $f$ under distributions $p$ and $q$. Given that the continuous relaxation $\tilde{f}$ is $\ell$-Lipschitz (Assumption \ref{assump:regularity}), we invoke the Kantorovich-Rubinstein duality for the $W_1$ distance:
\begin{equation}
    \mathcal{R}(p, q) = |\mathbb{E}_{x \sim p}[\tilde{f}(x)] - \mathbb{E}_{x \sim q}[\tilde{f}(x)]| \le \sup_{|\phi|_{Lip} \le \ell} \left| \int \phi dp - \int \phi dq \right| = \ell \cdot W_1(p, q),
\end{equation}

where $\text{Lip}_{\ell}(\mathcal{X}) = \{ \phi: \mathcal{X} \to \mathbb{R} \mid |\phi|_{Lip} \le \ell \}$ denotes the set of all functions with Lipschitz constant at most $\ell$. By the definition of the $W_1$ distance, the supremum on the right-hand side is exactly $\ell \cdot W_1(p, q)$. This step translates the optimization gap into a geometric distance between distributions.

Then, to relate the geometric distance $W_1$ to the information-theoretic divergence $KL(q \| p)$, we consider two distinct regimes to ensure the universality of the lemma:

\begin{itemize}
\item \textbf{General Case (Pinsker's Inequality):} On a compact solution manifold $\mathcal{X}$ with diameter $D = \sup_{x,y \in \mathcal{X}} \|x-y\|$, the $W_1$ distance is bounded by the Total Variation (TV) distance: $W_1(p, q) = W_1(q,p) \le D \cdot TV(q, p)$. By Pinsker's inequality, $TV(q, p) \le \sqrt{\frac{1}{2} KL(q || p)}$, yielding:\begin{equation}W_1(p, q) \le \frac{D}{\sqrt{2}} \sqrt{KL(q \parallel p)}.\end{equation}This bound holds for any arbitrary target distribution $q$ on a bounded domain, justifying the lemma's applicability to general NCO tasks beyond diffusion-based priors.

\item \textbf{Regular Case (Talagrand's $T_2$ Inequality):} For target distributions $q$ that exhibit stronger concentration (e.g., Gaussian-smoothed Gibbs measures used in DM), we assume $q$ satisfies a Log-Sobolev Inequality with constant $1/\sigma^2$. This implies the Talagrand's $T_2$ inequality: $W_1(p, q) \le W_2(p, q) = W_2(q, p) \le \sqrt{2\sigma^2 KL(q \| p)}$.
\end{itemize}

By defining $\mathcal{C} = \min \{ \frac{D}{\sqrt{2}}, \sqrt{2\sigma^2} \}$ (or more generally, treating $\mathcal{C}$ as a constant capturing the domain diameter or concentration radius), we obtain:
\begin{equation}
\mathcal{R}(p, q) \le \ell \cdot \mathcal{C} \sqrt{KL(q || p)}.
\end{equation}
\end{proof}

\subsection{Proof of Lemma \ref{thm:dm_scaling}: Dimension-Dependent Regret of Conditional DM}
\label{sec:appendix_proof_dm_scaling}

\begin{proof}
    This proof establishes that the optimization regret of a conditional DM is governed by the intrinsic dimensionality of the unsolved decision space. We provide a rigorous derivation by projecting the dynamics onto the free subspace and utilizing the conditional covariance dissipation identity.

% \noindent\textbf{1. Subspace Confinement and Measure Decomposition.}
In our task, the conditioning $Mx = c$ restricts the generative process to an affine manifold $\mathcal{V}_c := \{x \in \mathbb{R}^d \mid Mx = c\}$, where $c$ is the prefix (partial solution). Let $\mathbf{P} = \mathbf{I} - \mathbf{M}$ be the orthogonal projection onto the unsolved (free) subspace. We decompose the state as $x = \mathbf{M}x + \mathbf{P}x$. Since $\mathbf{M}x$ is fixed to the constant vector $c$, the forward SDE (e.g., Variance Preserving or Variance Exploding) admits a degenerate transition kernel $q_t(x|c) = \nu_t(z|c) \otimes \delta_c(u)$, where $z = \mathbf{P}x \in \mathbb{R}^{d_{eff}}$ and $d_{eff} = \text{rank}(\mathbf{P})$.

The conditional score function $\nabla \log q_t(x|c)$ is only well-defined on the subspace $\text{Im}(\mathbf{P})$, and its components in $\text{Im}(\mathbf{M})$ vanish. Consequently, the reverse SDE is strictly confined to $\mathcal{V}_c$:
\begin{equation}
    dx = \{ f(x,t) - g(t)^2 \nabla \log q_t(x|c) \} dt + g(t) d\bar{w}
\end{equation}
where the stochastic evolution and score estimation error $\epsilon_{score}$ are restricted to $d_{eff}$ degrees of freedom, effectively reducing the ambient complexity from $d$ to $d_{eff}$.

% \noindent\textbf{2. Conditional Covariance Dissipation and KL Scaling.}
To bound the KL divergence on the manifold, we extend the nearly $d$-linear framework \citep{benton2023nearly} to the conditional case. Like Assumption 1 in \citep{benton2023nearly}, we assume the model achieves a per-dimension $L^2$ error $\epsilon_{score}$ specifically on the unsolved manifold $\mathcal{V}_c$.

A critical pillar of this proof is the Conditional Covariance Dissipation Identity. Let $\boldsymbol{\Sigma}_t := \text{Cov}(X_0 | X_t, c)$ be the conditional posterior covariance. Following the Tweedie relation, the score function's Hessian is related to $\boldsymbol{\Sigma}_t$. For a general class of linear SDEs (not limited to OU), the expected covariance evolves as:
\begin{equation}
    \frac{\sigma_t^3}{2\dot{\sigma}_t} \frac{d}{dt} \mathbb{E}[\boldsymbol{\Sigma}_t | c] = - \mathbb{E}[\boldsymbol{\Sigma}_t^2 | c]
\end{equation}
where $\sigma_t^2$ is the noise schedule. While \citep{benton2023nearly} derived this for the unconditional case, the identity holds on the subspace $\text{Im}(\mathbf{P})$ due to the orthogonality of $\mathbf{M}$ and $\mathbf{P}$. Since $\text{rank}(\boldsymbol{\Sigma}_t) = d_{eff}$, the trace $\text{Tr}(\boldsymbol{\Sigma}_t)$—which governs the discretization drift—scales linearly with $d_{eff}$. 

To avoid the score singularity at $t=0$, we consider the divergence between the generated distribution $p$ and the Gaussian-smoothed target $q_\delta$ at early-stopping time $\delta > 0$. Integrating over the time horizon $T$, the KL divergence scales as:
\begin{equation}
D_{KL}(q_\delta  \| p \mid c) \le \mathcal{C} [ \epsilon_{score}^2 + \kappa^2 d_{eff} N + \kappa d_{eff} T + d_{eff} e^{-2T} ] = \mathcal{O}(d_{eff} \cdot \epsilon_{score}^2)
\end{equation}
The initialization and discretization error components are fundamentally linear in the number of unsolved variables $d_{eff}$.

% \noindent\textbf{3. Mapping to Optimization Regret.}
Finally, we invoke Lemma \ref{lemma:kl_to_regret} (The Geometry-Statistics Bridge). Given the $\ell$-Lipschitz continuity of the objective $f$, the regret is bounded by the $W_1$ distance, which is in turn controlled by the KL divergence via the transportation-cost inequality. Substituting the $d_{eff}$-linear KL bound into the bridge inequality yields:
\begin{equation}
\mathcal{R}_{DM}(p, q \mid c) \le \ell \cdot \mathcal{C} \sqrt{D_{KL}(q_\delta \| p \mid c)} \le \mathcal{O}(\ell \cdot \sqrt{d_{eff}} \cdot \epsilon_{score})
\end{equation}
where we treat the discrepancy between $q$ and $q_\delta$ as a negligible additive bias for $\delta \ll 1$.
\end{proof}

\subsection{Justification of Lemma \ref{lemma:rl_scaling}: Super-linear RL Horizon Scaling}
\label{app:rl_scaling}

Lemma~\ref{lemma:rl_scaling} models the accumulated regret of an autoregressive RL construction policy as super-linear in the decision horizon. This scaling is motivated by standard regret bounds in finite-horizon RL. In tabular episodic MDPs, Q-learning with UCB exploration achieves a regret bound of the form
\[
    \widetilde{O}\!\left(\sqrt{H^3 S A T}\right),
\]
where \(H\) is the episode horizon, \(S\) and \(A\) are the numbers of states and actions, and \(T\) is the total number of interaction steps~\citep{jin2018Qlearning}. For RL with linear function approximation, related analyses obtain regret bounds of the form
\[
    \widetilde{O}\!\left(\sqrt{d^3 H^3 T}\right),
\]
where \(d\) denotes the feature dimension~\cite{jin2020provably,velegkas2022reinforcement}. In both cases, the dependence on the horizon contains an \(H^{3/2}\) factor, which is super-linear in \(H\). 
More broadly, super-linear horizon dependence is a common feature of finite-horizon RL regret analyses. 
These bounds provide theoretical support for the horizon-scaling model in Lemma~\ref{lemma:rl_scaling}, although they do not directly characterize neural CO solvers.

Autoregressive neural CO solvers such as AM and POMO instantiate sequential construction policies: a solution is produced through a sequence of decisions whose horizon grows with the problem size. Existing finite-horizon RL bounds do not directly characterize these neural policy-gradient solvers, but they provide a principled motivation for modeling accumulated RL regret as super-linear in the construction horizon. This modeling choice is also consistent with the empirical behavior of autoregressive RL solvers on large-scale CO instances, where solution quality often degrades as the number of construction steps increases.

\subsection{Proof of Theorem \ref{thm:optimal_handoff}: Existence and Uniqueness of the Optimal Trigger Step $\tau^*$}
\label{sec:app_proof_optimal_trigger}
\begin{proof}
This proof analyzes the hybrid regret $\mathcal{L}(\tau)$ to establish the uniqueness and global optimality of the trigger step $\tau^*$. 

Based on the definition of $\mathcal{L}(\tau) = \int_0^{\tau} r_{\text{RL}}(h)\,dh + \int_{\tau}^{d} r_{\text{DM}}(h)\,dh$, we can get that the derivative of the hybrid regret with respect to the trigger step is:
$$\mathcal{L}'(\tau) = r_{\text{RL}}(\tau) - r_{\text{DM}}(\tau).$$

According to Lemma \ref{lemma:rl_scaling}, $r_{RL}(h) = \frac{d}{dh}\mathcal{R}_{\text{RL}}(h) \sim \mathcal{O}(h^{\alpha-1})$ with $\alpha -1 >0$, which is strictly increasing in $h$, i.e. $r_{\text{RL}}(h) \geq 0$.

For the regret of DM, it satisfies $R_{\text{DM}}(d) \leq \mathcal{O}(\sqrt{d})$ by Lemma \ref{thm:dm_scaling}, where $R_{\text{DM}}(d)$ means the regret of DM (fixed distribution $p$ and $q$ so we ignore its signal in the regret) with $d$-dimensional decision variables. Since $\sqrt{d}$ is concave in $d$, $R_{\text{DM}}(d)$ exhibits diminishing marginal returns as the problem size $d$ increases. In the hybrid framework, the regret of conditional DM facing a problem of size $d-\tau$ is calculated by $\int_{\tau}^{d} r_{\text{DM}}(h)\,dh = \int_{0}^{d} r_{\text{DM}}(h)\,dh - \int_{0}^{\tau} r_{\text{DM}}(h)\,dh = R_{\text{DM}}(d) - R_{\text{DM}}(\tau)$. Due to the decreasing marginal regret of $R_{\text{DM}}$, we can get $r_{\text{DM}}(\tau)$ is non-increasing. 

Since $r_{\text{RL}}(\tau)$ is strictly increasing and $r_{\text{DM}}(\tau)$ is non-increasing, their difference $\mathcal{L}'(\tau) = r_{\text{RL}}(\tau) - r_{\text{DM}}(\tau)$ is strictly increasing.
The boundary conditions $r_{\text{RL}}(0) < r_{\text{DM}}(0)$ and $r_{\text{RL}}(d) > r_{\text{DM}}(d)$ imply $\mathcal{L}'(0) < 0$ and $\mathcal{L}'(d) > 0$. Since $\mathcal{L}'(\tau)$ is continuous and strictly increasing, the Intermediate Value Theorem guarantees exactly one $\tau^* \in (0, d)$ such that $\mathcal{L}'(\tau^*) = 0$, i.e., $r_{\text{RL}}(\tau^*) = r_{\text{DM}}(\tau^*)$. Since $\mathcal{L}'(\tau) < 0$ for $\tau < \tau^*$ and $\mathcal{L}'(\tau) > 0$ for $\tau > \tau^*$, this $\tau^*$ is the unique global minimum of $\mathcal{L}(\tau)$.

Specially, since $\mathcal{L}'(\tau)$ is strictly increasing with $\mathcal{L}'(0) < 0$ and $\mathcal{L}'(d) > 0$, $\mathcal{L}$ is strictly decreasing on $[0, \tau^*)$ and strictly increasing on $(\tau^*, d]$. This implies $\mathcal{L}(\tau^*) < \mathcal{L}(0)$ and $\mathcal{L}(\tau^*) < \mathcal{L}(d)$, i.e., the hybrid framework achieves strictly lower expected regret than either pure RL or pure DM alone.
\end{proof}

\section{Additional Experimental Results}
\label{sec:app_exp_results}

\subsection{TSP Performance Results}
\label{sec:app_exp_results_tsp}

For TSP-50/100, we omit wall-clock time because several baselines are quoted from prior work under heterogeneous hardware settings, while greedy inference for small instances is already in the millisecond-scale regime and can be dominated by hardware, batching, and implementation overhead. We therefore focus time reporting on large-scale TSP, where computational differences are more meaningful.

\begin{table*}[htbp]
\centering
%\scriptsize
\small
\setlength{\tabcolsep}{12pt}
\caption{Results with \textbf{Greedy Decoding} on TSP-50 and TSP-100. RL: Reinforcement Learning, SL: Supervised Learning, G: Greedy Decoding, S: Sample Decoding. $^*$ denotes results that are quoted from previous works. The ``Prefix\_Difusco (S)`` represents the best result obtained by running our Prefix\_Difusco model sampling 16 times in parallel.}
\label{tab_app:tsp100_results}
\setlength{\tabcolsep}{10pt} % Adjust column spacing
\renewcommand{\arraystretch}{1.1} % Adjust row height
\begin{tabular}{llcccc}
\toprule
\textbf{ALGORITHM} & \textbf{TYPE} & \multicolumn{2}{c}{\textbf{TSP-50}} & \multicolumn{2}{c}{\textbf{TSP-100}} \\
\cmidrule(lr){3-4} \cmidrule(lr){5-6}
& & Len $\downarrow$ & Gap $\downarrow$ & Len $\downarrow$ & Gap $\downarrow$ \\
\midrule
Concorde \citep{Applegate2006} & Exact & 5.69 & 0.00\% & 7.76 & 0.00\% \\
2Opt \citep{Croes1958} & Heuristics & 5.86 & 2.95\% & 8.03 & 3.54\% \\
\midrule
AM* \cite{Kool2019} & RL+G & 5.80 & 1.76\% & 8.12 & 4.53\% \\
GCN* \cite{Joshi2019} & SL+G & 5.87 & 3.10\% & 8.41 & 8.38\% \\
Transformer* \cite{Bresson2021} & RL+G & 5.71 & 0.31\% & 7.88 & 1.42\% \\
POMO* \cite{Kwon2020} & RL+G & 5.73 & 0.64\% & 7.84 & 1.07\% \\
Sym-NCO* \cite{Kim2022} & RL+G & 5.73 & 0.64\% & 7.84 & 0.94\% \\
Image Diffusion* \cite{Graikos2022} & SL+G & 5.76 & 1.23\% & 7.92 & 2.11\% \\
DIFUSCO ($T_s$=50) \cite{Sun2023} & SL+G & 5.71 & 0.45\% & 7.85 & 1.21\% \\
%DIFUSCO ($T_s$=100) & SL+G & 5.71 & 0.41\% & 7.84 & 1.16\% \\
T2T ($T_s$=50,$T_t$=15) \cite{li2023t2t} & SL+G & 5.69 & 0.07\% & 7.77 & 0.20\% \\
\textbf{Prefix\_Difusco} & SL+G & 5.70 & 0.27\% & 7.81 & 0.62\% \\
\textbf{Prefix\_Difusco} (S) & SL+S & 5.69 & 0.04\% & 7.76 & 0.04\% \\
\textbf{HyCO (AM)} & \textbf{RL+SL+G} & 5.69 & \textbf{0.03\%} & 7.76 & \textbf{0.02\%} \\
\textbf{HyCO (POMO)} & \textbf{RL+SL+G} & 5.69 & \textbf{0.03}\% & 7.76 & \textbf{0.02\%} \\
\midrule
AM & RL+G+2OPT & 5.77 & 1.41\% & 8.02 & 3.32\% \\
GCN & SL+G+2OPT & 5.77 & 1.40\% & 8.01 & 3.21\% \\
Transformer & RL+G+2OPT & 5.70 & 1.06\% & 7.96 & 1.89\% \\
POMO & SL+G+2OPT & 5.73 & 0.63\% & 7.91 & 1.62\% \\
Sym-NCO & SL+G+2OPT & 5.73 & 0.64\% & 7.90 & 0.76\% \\
DIFUSCO & SL+G+2OPT & 5.69 & 0.09\% & 7.78 & 0.22\% \\
T2T & SL+G+2OPT & 5.69 & \textbf{0.02\%} & 7.76 & 0.06\% \\
\textbf{HyCO (POMO)} & \textbf{RL+SL+G+2OPT } & 5.69 & 0.03\% & 7.76 & \textbf{0.01\%} \\
\bottomrule
\end{tabular}
\end{table*}

\noindent\textbf{Impact of Local Search (2-opt).}
To investigate whether the improvements of HyCO stem merely from local refinements or from a fundamentally better global solution structure, we evaluated the performance when a 2-opt local search post-processing step \citep{Croes1958} is applied. As indicated in our experimental logs, while 2-opt consistently improves the objective values for all constructive methods, the relative advantage of HyCO persists. On large-scale instances (TSP-1000), even after exhaustive 2-opt refinement, the standard RL baselines (e.g., POMO+2opt) typically fail to reach the solution quality of HyCO. This observation implies that the baseline solvers often get trapped in poor local basins due to early-stage compounding errors, which local search cannot rectify. In contrast, HyCO utilizes the diffusion model to bridge the critical regime shift, producing a globally superior solution skeleton that is more amenable to local refinement. This confirms that HyCO contributes to \textit{global consistency} rather than just local smoothness.

\noindent\textbf{Sample Efficiency and Inference Budget.}
A critical comparison in our results lies between HyCO and the brute-force conditional baseline, Prefix\_Difusco (S). The latter represents a standard sampling approach where the model generates 16 independent completions for a fixed prefix and selects the best one. Remarkably, HyCO achieves solution quality that is comparable to or better than this baseline despite using a single guided completion ($S=4$). For instance, on TSP-100, HyCO (0.02\% gap) outperforms the 16-sample baseline (0.04\% gap). This result reinforces the core premise of our theory: the \textit{triggering step} is more consequential than the \textit{quantity} of samples. By adaptively identifying the optimal triggering time window, HyCO eliminates the need for wasteful over-sampling, achieving superior efficiency with \textbf{4$\times$ fewer inference passes}. This structural advantage suggests that HyCO effectively locates the optimal low-regret subspace, whereas blind over-sampling struggles to compensate for a suboptimal triggering time.

\noindent\textbf{Plug-and-Play Versatility.}
To validate the universality of our framework, we extended our evaluation by pairing HyCO with three distinct sequential architectures: the standard autoregressive AM, the symmetry-enhanced POMO, and the divide-and-conquer LEHD. HyCO yields consistent performance gains across all backbones, regardless of their intrinsic strength. For weaker solvers like AM, HyCO acts as a revitalizing mechanism, reducing the optimality gap on TSP-1000 from over 20\% to about 5\% by correcting severe horizon-wise error accumulation. Conversely, for state-of-the-art constructive solvers like LEHD, HyCO provides further refinement, pushing the limits of solution quality. This consistent improvement confirms that HyCO serves as a general-purpose, test-time inference wrapper that addresses the fundamental uncertainty accumulation inherent to sequential construction, rather than being tailored to the idiosyncrasies of a specific policy.

\begin{table}[h!]
\centering
\small
%\scriptsize
\setlength{\tabcolsep}{4pt} % 适当减小列间距
\caption{Results on large-scale TSP problems. RL, SL, AS, G, and S denote Reinforcement Learning, Supervised Learning, Active Search, Greedy decoding, and Sample decoding, respectively. %\textcolor{red}{POMO sampling 7680 and 3840 for TSP-500 and 1000 separately.} 
Len means the average tour length. * indicates the baseline for computing the performance gap. The Time column reports the average inference time per instance.} % 根据需要修改标题
\label{tab:app_tsp_large_results}
\renewcommand{\arraystretch}{1.10} % 增加行高，使表格更易读

\begin{tabular}{llcccccc}
\toprule
\centering

%\scriptsize
\setlength{\tabcolsep}{4pt} % 适当减小列间距
\textbf{ALGORITHM} & \textbf{TYPE} & \multicolumn{3}{c}{\textbf{TSP-500}} & \multicolumn{3}{c}{\textbf{TSP-1000}} \\
\cmidrule(lr){3-5} \cmidrule(lr){6-8}
& & Len$\downarrow$ & Gap$\downarrow$ & Time$\downarrow$ & Len$\downarrow$ & Gap$\downarrow$ & Time$\downarrow$ \\
\midrule
Concorde        & Exact        & 16.55* & 0.00\%   & 37.66m & 23.53* & 0.00\%   & 6.65h  \\
Gurobi          & Exact        & 16.55  & 0.00\%   & 45.63h & 23.53  & 0.00\%   & 48h   \\
LKH-3 (default) & Heuristics   & 16.55  & 0.00\%   & 46.28m & 23.53  & 0.00\%   & 2.57h  \\
\midrule
AM              & RL+G         & 20.02  & 20.99\%  & \textbf{0.47s}  & 28.52  & 21.21\%  & 1.09s \\
GCN             & SL+G         & 29.72  & 79.61\%  & 6.67m  & 43.15  & 83.38\%  & 28.52m  \\
POMO     & RL+G      & 19.13  & 15.62\%  & 0.49s & 27.15   & 15.38\%        & \textbf{0.94s}     \\
POMO(S)     & RL+S      &  18.59  & 12.32\%  & 2.97m  &   28.34    &  20.44\%      &  3.25m   
\\
DIMES           & RL+G         & 18.93  & 14.38\%  & 0.97m  & 27.23  & 15.73\%  & 2.08m  \\
DIMES           & RL+AS+G      & 17.81  & 7.61\%   & 2.10h  & 25.11  & 6.72\%   & 4.49h  \\
LEHD           & RL/SL+G    &   16.82  & 1.64\%  &  1.82s  &  24.29 & 3.23\%   &   3.84s\\
DIFUSCO         & SL+G         & 18.11  & 9.41\%   & 5.70m  & 25.68  & 9.14\%   & 11.5m  \\
T2T             & SL+G         & 17.39  & 5.09\%   & 4.90m  &25.17  & 8.87\%   & 15.66m  \\
Prefix\_Difusco  & SL+G         & 17.92  & 8.23\%   & 0.14m  & 25.86  & 9.91\%   & 0.35m  \\
Prefix\_Difusco(16 samples)  & SL+S         & 17.31  & 4.58\%   & 2.10m  & 24.96  & 6.07\%   & 5.94m  \\
\textbf{HyCO(POMO)}           & \textbf{RL+SL+G} & 17.24 & 3.56\% & 1.20m & 24.61 & 4.58\% & 2.43m \\
\textbf{HyCO(LEHD)}           & \textbf{RL+SL+G} & \textbf{16.81} & \textbf{1.57\%} & 0.35m & \textbf{24.24} & \textbf{3.04\%} & 0.36m \\
\midrule
POMO     & RL+G+2OPT      & 17.77  & 7.37\%  & 0.58s & 25.10   & 6.67\%        & 1.05s     \\
DIMES              & RL+G+2OPT      & 17.65  & 6.62\%  & 1.01m & 24.83  & 7.38\%  & 2.29m \\ 
DIMES              & RL+AS+G+2OPT   & 17.31  & 4.57\%  & 2.10h & 24.33  & 5.22\%  & 4.49h \\
LEHD           & RL/SL+G+2OPT    &   16.77  & 1.40\%  &  1.83s  & 24.06& 2.29\%   &   3.85s\\
DIFUSCO                & SL+G+2OPT        & 16.83  & 1.68\%  & 5.75m  &23.92  & 1.66\%  & 17.52m \\
T2T                    & SL+G+2OPT        &16.86  & 1.92\%  & 2.42m  &23.86  &1.42\%  & 15.90m \\
%Prefix\_Difusco(used in HyCO)  & SL+G+2OPT         & 16.83  & 1.68\%   & 0.56m  & 23.93  & 1.72\%   & 0.52m  \\
\textbf{HyCO(POMO)}           & \textbf{RL+SL+G+2OPT} & 16.82 & 1.63\% & 2.32m & 23.92 & 1.66\% & 4.40m \\
\textbf{HyCO(LEHD)}           & \textbf{RL+SL+G+2OPT} & \textbf{16.74} & \textbf{1.2\%} & 0.36m & \textbf{23.64} & \textbf{0.46\%} & 0.36m \\
\bottomrule
\end{tabular}
\end{table}

\subsection{MIS Performance Results}
\label{sec:app_exp_results_mis}

To validate the versatility of HyCO beyond routing tasks, we extended our evaluation to the MIS problem, a fundamental graph covering challenge. It is crucial to clarify the definition of the sampling budget $S$ to appreciate the efficiency gains fully. For the baseline, $S$ denotes the number of independent trajectories generated from scratch; for HyCO, $S$ represents the number of top-ranked RL candidates selected at the trigger step for diffusion completion. As detailed in Table~\ref{tab:mis_results}, we benchmarked HyCO against the specialized RL baseline (LWD) and the Diffusion backbone (prefix\_COExpander) under an identical inference budget ($S=5$). This setup allows us to rigorously assess whether the performance gains stem from our adaptive trigger and hybrid mechanism rather than simply increased computational resources.

The results demonstrate a strong synergy similar to our findings in TSP. HyCO consistently outperforms both learning-based baselines across all datasets, with the advantage being most pronounced on the challenging ER-700-800 benchmark. While the standalone RL policy suffers a significant 13.16\% optimality gap and the pure Diffusion model plateaus at 9.16\%, HyCO effectively combines their strengths to achieve a much lower gap of \textbf{5.76\%}. Similarly, on RB-Large, HyCO reduces the RL error by nearly two-thirds (from 15.31\% to 5.28\%).

Notably, HyCO ($S=5$) not only surpasses the baseline at the same budget but also remains competitive with or superior to baselines using significantly larger budgets (e.g., $S=64$, as observed in typical diffusion evaluations). This indicates that the performance gain is attributable to the \textit{quality} of the trigger-guided subspace exploration rather than the \textit{quantity} of blind sampling, a phenomenon consistent with our observations in the TSP experiments.

% \begin{table*}[htbp!]
% \centering
% \caption{Performance comparison across RB-LARGE, ER-700-800, and SATLIB benchmarks for the Maximum Independent Set (MIS) problem. Obj. denotes the average node number of independent sets (higher is better). Gap indicates the percentage deviation from the optimal solution (lower is better).}
% \label{tab:mis_results}
% %\footnotesize
% \setlength{\tabcolsep}{8pt}  
% \begin{tabular}{l l r r  r r  r r }
% \toprule
% \textbf{METHOD} & \textbf{TYPE} & \multicolumn{2}{c}{\textbf{RB-LARGE}} & \multicolumn{2}{c}{\textbf{ER-700-800}} & \multicolumn{2}{c}{\textbf{SATLIB}} \\
% \cmidrule(lr){3-4} \cmidrule(lr){5-6} \cmidrule(lr){7-8} 
% & & Obj.$\uparrow$ & Gap$\downarrow$ & Obj.$\uparrow$ & Gap$\downarrow$ & Obj.$\uparrow$ & Gap$\downarrow$ \\
% \midrule
% KaMIS & Exact & 43.00 & 0.00\%  & 44.97 & 0.00\%  & 425.95 & 0.00\% \\
% Gurobi & Heuristics & 42.19 & 1.83\%  & 38.78 & 13.75\%  & 425.92 & 0.01\% \\
% \midrule
% LWD & RL+G & 36.67  & 15.31\% & 39.05 & 13.16\% & 421.80 & 1.04\%  \\
% prefix\_Coexpander & SL+G &  40.05& 6.84\% & 40.18 & 10.65\% & 421.84 & 0.96\% \\
% prefix\_Coexpander (s=5) & SL+S & 40.06 & 6.83\% & 40.85 & 9.16\% & 424.17 & 0.42\% \\
% prefix\_Coexpander(s=64)& SL+S & 40.58 & 5.63\% & 40.91 & 9.03\% &  424.78 & 0.27\% \\
% \textbf{HyCO} ($S_{cand}$=5) & RL+SL+G & \textbf{40.73} & \textbf{5.28\%} & \textbf{42.38} & \textbf{5.76\%} &  \textbf{425.40} & \textbf{ 0.12\%} \\
% \bottomrule
% \end{tabular}
% \setlength{\tabcolsep}{6pt} % 恢复默认列间距
% \end{table*}

\begin{table*}[htbp!]
\centering
\caption{Performance comparison across RB-LARGE, ER-700-800, and SATLIB benchmarks for the Maximum Independent Set (MIS) problem. Obj. denotes the average node number of independent sets (higher is better). Gap indicates the percentage deviation from the optimal solution (lower is better). Time is the average inference time in seconds. S: Sample Decoding. $S_{cand}$ is the candidate number of full DM proposals $\mathcal{P}_{DM}$.}
\label{tab:mis_results}
%\scriptsize               % 保持小字体
%\footnotesize
\small
\setlength{\tabcolsep}{4pt}  
\begin{tabular}{l r r r r r r r r r}
\toprule
\textbf{METHOD} & \multicolumn{3}{c}{\textbf{RB-LARGE}} & \multicolumn{3}{c}{\textbf{ER-700-800}} & \multicolumn{3}{c}{\textbf{SATLIB}} \\
\cmidrule(lr){2-4} \cmidrule(lr){5-7} \cmidrule(lr){8-10} 
 & Obj.$\uparrow$ & Gap$\downarrow$ & Time$\downarrow$ & Obj.$\uparrow$ & Gap$\downarrow$ & Time$\downarrow$ & Obj.$\uparrow$ & Gap$\downarrow$ & Time$\downarrow$ \\
\midrule
KaMIS & 43.00 & 0.00\% & 56.97s & 44.97 & 0.00\% & 60.75s & 425.95 & 0.00\% & 24.37s \\
Gurobi & 42.19 & 1.83\% & 33.84s & 38.78 & 13.75\% & 60.49s & 425.92 & 0.01\% & 13.47s \\
\midrule
LWD (RL) & 36.67  & 15.31\% &\textbf{0.85s} & 39.05 & 13.16\% & \textbf{ 0.64s }& 421.80 & 1.04\%  & 0.66s \\
prefix\_Coexpander (Greedy) &  40.05& 6.84\% & 1.15s & 40.18 & 10.65\% & 0.71s & 421.84 & 0.96\% & \textbf{0.25s} \\
prefix\_Coexpander (s=5) & 40.06 & 6.83\% & 1.22s & 40.85 & 9.16\% & 0.96s & 424.17 & 0.42\% & 0.91s \\
prefix\_Coexpander(s=64)  & 40.58 & 5.63\% & 4.48s & 40.91 & 9.03\% & 4.03s &   424.78& 0.27\% & 4.27s \\
\textbf{HyCO} ($S_{cand}$=5) & \textbf{40.73} & \textbf{5.28\%} & 2.69s & \textbf{42.38} & \textbf{5.76\%} &  4.69s &  \textbf{425.40} & \textbf{ 0.12\%} & 6.31s \\
\bottomrule
\end{tabular}
\setlength{\tabcolsep}{6pt} % 恢复默认列间距
\end{table*}

\subsection{OP Performance Results}
\label{sec:app_exp_results_op}

While our primary analysis focuses on TSP and MIS, we further evaluate HyCO on the Orienteering Problem (OP) to assess its generality on a different routing problem. 
We consider OP-50, OP-100, and OP-200, and compare HyCO against both the standalone RL backbone AM and the diffusion backbone prefix\_Coexpander under different sampling budgets, as summarized in Table~\ref{tab:op_comprehensive_results}. 
Across all scales, HyCO consistently achieves the best performance among learning-based methods, improving over both the RL and diffusion baselines.

The advantage becomes more pronounced as the problem size increases. 
On OP-200, AM obtains an optimality gap of 13.05\%, while prefix\_Coexpander with the same sampling budget \(s=8\) reduces the gap to 4.15\%. 
HyCO further lowers the gap to \textbf{1.08\%}, corresponding to a 91.7\% relative reduction over AM, and a 74.0\% relative reduction over prefix\_Coexpander \((s=8)\). 
Notably, HyCO also outperforms the much stronger prefix\_Coexpander \((s=64)\) baseline, which attains a gap of \textbf{2.77\%} on OP-200. 
This suggests that the improvement is not merely due to increasing the number of diffusion samples, but rather comes from using an informative RL prefix and an adaptive handover to guide diffusion generation toward a more favorable subspace.

HyCO achieves these gains with moderate computational overhead. 
For OP-200, HyCO takes \textbf{2.70s} per instance, which is comparable to prefix\_Coexpander \((s=64)\) and substantially faster than the Gurobi-300 reference. 
These results demonstrate that HyCO can serve as a plug-and-play hybrid inference mechanism beyond TSP, effectively combining weaker individual components into a stronger solver through adaptive RL-to-DM handover.

\begin{table}[htbp!]
\centering
\caption{Performance comparison across OP-50, OP-100, and OP-200 benchmarks. Obj. denotes the average collected prize (higher is better). Gap indicates the percentage deviation from the optimal solution (lower is better). Time is the average inference time in seconds. S: Sample Decoding. $S_{cand}$ is the candidate number of full DM proposals $\mathcal{P}_{DM}$.}
\label{tab:op_comprehensive_results}
\renewcommand{\arraystretch}{1.1} % 稍微增加行高，提升可读性
\setlength{\tabcolsep}{3pt}    
%\footnotesize   
\small
\begin{tabular}{l *{3}{ccc}} 
\toprule
\textbf{Method} & \multicolumn{3}{c}{\textbf{OP-50}} & \multicolumn{3}{c}{\textbf{OP-100}} & \multicolumn{3}{c}{\textbf{OP-200}} \\
\cmidrule(lr){2-4} \cmidrule(lr){5-7} \cmidrule(lr){8-10}
& Obj. $\uparrow$ & Gap $\downarrow$ & Time $\downarrow$
& Obj. $\uparrow$ & Gap $\downarrow$ & Time $\downarrow$
& Obj. $\uparrow$ & Gap $\downarrow$ & Time $\downarrow$ \\
\midrule
% --- Solver Baselines ---
Gurobi-300      & 14.37 & 0.00\%    & 300.0s & 32.10 & 0.00\%   & 300.0s & 44.38 & 0.00\%   & 300.0s \\
Gurobi-30       & 14.31 & 0.40\%    & 30.0s  & 31.13 & 3.02\%   & 30.0s  & 31.37 & 29.31\%  & 30.0s \\
\midrule
AM (RL)                     & 11.92 & 17.04\% & \textbf{0.03s} & 28.37 & 11.62\% & \textbf{0.03s} & 38.59 & 13.05\% & \textbf{0.09s} \\
prefix\_Coexpander (Greedy) & 12.27 & 14.61\% & 0.09s & 28.64 & 10.77\% & 0.15s & 41.36 & 6.80\%  & 0.19s \\
prefix\_Coexpander (s=8)    & 12.42 & 14.01\% & 0.12s & 29.05 & 9.49\%  & 0.21s & 42.54 & 4.15\%  & 0.47s \\
prefix\_Coexpander (s=64)   & 12.51 & 12.94\% & 1.08s & 29.20 & 9.03\%  & 0.89s & 43.15 & 2.77\%  & 2.67s \\
\textbf{HyCO} ($S_{cand}$=8)& \textbf{13.23} & \textbf{7.93\%} & 0.55s & \textbf{29.51} & \textbf{8.06\%} & 0.41s & \textbf{43.90} & \textbf{1.08\%} & 2.70s \\
\bottomrule
\end{tabular}
\end{table}

\subsection{Adaptive Trigger vs. Fixed-time Triggering}
\label{sec:app_exp_fixed_vs_adaptive}

To validate the effectiveness of our adaptive trigger, we compare HyCO against a fixed-time trigger baseline.
Specifically, we sweep over fixed triggering steps $k$ on a validation split and report the best-performing schedule (\textit{Best Fixed}).
This provides a strong reference for quantifying the benefit of adaptive triggering.

Figure~\ref{fig:fixed_vs_adaptive_bar_horiz} reports the percentage optimality gap to the exact solver's solution for both TSP and MIS.
Across all datasets, HyCO with adaptive triggering consistently outperforms the best fixed-time baseline.
On TSP benchmarks, adaptive triggering yields improvements over a hindsight-selected schedule, reducing the gap from $0.026\%$ to $0.020\%$ on TSP-100 and from $5.66\%$ to $3.56\%$ on TSP-500.

A similar trend holds on MIS benchmarks.
On SATLIB, adaptive triggering reduces the gap from $1.20\%$ to $0.12\%$ ($1.08\%$ absolute improvement), bringing HyCO close to the exact reference.
On ER-700--800, where instances vary significantly due to random graph generation, adaptive triggering still improves over the best fixed schedule by $3.22\%$.
Overall, these results support that HyCO's trigger functions as a statistically effective triggering proxy: it reliably identifies a regime where conditional global completion provides net gains, without requiring any fixed schedule tuned per dataset or attempting to approximate the expectation-level optimal trigger step on each instance.

\begin{figure}[htbp]
\centering
\begin{tikzpicture}
    \begin{axis}[
        xbar,                       % 设置为横向柱状图
        bar width=12pt,             % 调整柱子粗细，双栏不宜太粗
        width=0.8\columnwidth,         % 【关键】宽度自动适应双栏的单栏宽度
        height=6.5cm,               % 高度适中
        xlabel={Optimality Gap (\%)}, % X轴标签（现在数值在X轴）
        % Y轴设置
        symbolic y coords={TSP-100, TSP-500, SATLIB, ER-700--800}, % 坐标名称
        ytick=data,                 % 显示所有Y轴刻度
        y dir=reverse,              % 【关键】反转Y轴，让TSP-100(表格第一行)显示在最上方
        ytick style={draw=none},    % 隐藏Y轴上的小刻度线，更干净
        y axis line style={draw=none}, % 隐藏Y轴那根竖线
        % X轴设置
        xmin=0, xmax=11.5,          % 设置X轴范围，留出空间给右边的数字标签
        xmajorgrids=true,           % 显示垂直网格线
        grid style={gray!10},       % 网格线颜色做淡一点
        axis x line*=bottom,        % 只显示底部的X轴线，隐藏顶部的
        xticklabel style={font=\footnotesize},
        % 柱子上的数值标签设置
        nodes near coords,          % 显示数值
        nodes near coords align={horizontal}, % 数值水平排列
        nodes near coords style={
            font=\scriptsize\bfseries, % 字体更小且加粗
            color=black!80,            % 颜色不要纯黑，柔和一点
            anchor=west,               % 数字对齐在柱子右侧
            /pgf/number format/fixed,
            /pgf/number format/precision=2 % 保留两位小数
        },
        % 图例设置
        legend style={
            at={(0.5,1)},        % 放在图表上方居中
            anchor=south,
            legend columns=-1,      % 图例横向排列
            draw=none,              % 去掉图例边框
            fill=none,              % 去掉图例背景
            /tikz/every even column/.append style={column sep=0.1cm} % 图例间距
        },
        enlarge y limits=0.15,      % 增加Y轴上下的留白
    ]

    % 数据组 1: Best Fixed
    \addplot[
        fill=academicBlue,          % 使用自定义蓝色
        draw=none                   % 去掉柱子边框
    ] coordinates {
        (0.026,TSP-100) 
        (5.66,TSP-500) 
        (1.20,SATLIB) 
        (8.98,ER-700--800)
    };

    % 数据组 2: HyCO (Adaptive)
    \addplot[
        fill=academicRed,           % 使用自定义红色
        draw=none
    ] coordinates {
        (0.020,TSP-100) 
        (3.56,TSP-500) 
        (0.12,SATLIB) 
        (5.76,ER-700--800)
    };

    \legend{Best Fixed, HyCO (Adaptive)}
    \end{axis}
\end{tikzpicture}
\caption{Adaptive trigger vs. best fixed-time trigger across tasks.
Best Fixed is selected by validation sweeping over fixed triggering steps.
We report the percentage optimality gap to the
exact solver’s solution for TSP and MIS. Lower is better.}
\label{fig:fixed_vs_adaptive_bar_horiz}
\vspace{-10pt}
\end{figure}

\subsection{Cross-Dataset Trigger Statistics}
\label{sec:app_trigger_stats}

Here we report the average trigger step across different problem classes and scales in Table~\ref{tab:trigger_stats}. Under the experimental settings used in this paper, the handover timing varies substantially across tasks and scales rather than following a fixed early-trigger pattern. While TSP-100 tends to trigger very early, larger TSP instances already exhibit later handovers. On MIS and OP benchmarks, the trigger can occur substantially later, suggesting that the RL solver can remain active for a nontrivial prefix before the conditional DM is invoked.

\begin{table*}[htbp]
\centering
\small
\setlength{\tabcolsep}{4pt}
\renewcommand{\arraystretch}{1.15}
\caption{Average trigger step across datasets. The handover timing varies substantially across problem classes and scales, indicating that HyCO uses an adaptive trigger rather than a fixed early handover.}
\label{tab:trigger_stats}
\begin{tabular}{lccccccccc}
\toprule
& \multicolumn{3}{c}{\textbf{TSP}} 
& \multicolumn{3}{c}{\textbf{MIS}} 
& \multicolumn{3}{c}{\textbf{OP}} \\
\cmidrule(lr){2-4} \cmidrule(lr){5-7} \cmidrule(lr){8-10}
\textbf{Scale / Benchmark}
& 100 & 500 & 1000 
& RB-LARGE & ER-700--800 & SATLIB
& 50 & 100 & 200 \\
\midrule
\textbf{Avg. Trigger Step}
& 0.52 & 6.00 & 8.11
& 56.14 & 10.09 & 81.77
& 3.46 & 1.79 & 20.06 \\
\bottomrule
\end{tabular}
\end{table*}

\newpage
\section{Ablation Study of HyCO}\label{sec:app_ablation}
% \yuheng{Add more ablation experiments done during the ICML rebuttal, and write a summarized paragraph at the beginning. Besides, check through the current table in case there are some errors.}
This section provides additional ablation studies to analyze the robustness and design choices of HyCO. We first compare adaptive triggering with validation-selected fixed-time schedules, showing that instance-dependent triggering consistently improves over a fixed handover time. We then study the sensitivity of the trigger and decoding hyperparameters, including the DM energy-probe timestep, the RL candidate pool size used for KL estimation, and the candidate selection threshold for DM exploration. Finally, we ablate the trigger signals and show that combining policy entropy with RL--DM disagreement yields better performance than either signal alone.

\subsection{Hyperparameter Sensitivity of Triggering and DM Exploration}

To empirically validate the theoretical framework established in Section 4, we conduct extensive ablation studies on the TSP-100 benchmark. These experiments aim to bridge the gap between our expectation-level theory and trajectory-level inference by scrutinizing how HyCO’s core components navigate the regime shift between sequential construction and global generation. We specifically focus on the effectiveness of the dual-signal trigger and the structural robustness of the framework across varying hyperparameter settings.

Table \ref{tab:ablation_probe_t} examines the sensitivity of our framework to the DM energy probe's timestep (\texttt{dm\_probe\_timestep}). The results demonstrate remarkable robustness: across a wide range of timesteps from 100 to 900, the optimality gap remains stable at 0.02\%. This is a significant practical advantage, as it indicates that the single-step energy probe is a reliable signal that does not require meticulous hyperparameter tuning.

\begin{table}[htbp!]
\centering
%\scriptsize
%\small
\setlength{\belowcaptionskip}{10pt}  % 控制 caption 到后面内容的距离

    \centering
    \setlength{\tabcolsep}{5pt}
    % CORRECTED: Escaped underscore in \texttt
    \caption{Sensitivity analysis on the DM energy probe timestep, \texttt{dm\_probe\_timestep}.}
    \label{tab:ablation_probe_t}
    \begin{tabular}{cccc}
        \toprule
        % CORRECTED: Removed trailing ~
        \textbf{Energy Probe Ts} & \textbf{Gap (\%)} $\downarrow$ & \textbf{Time (s)} & \textbf{Ave. trigger step (Trig. Rates)} \\
        \midrule
        100 & 0.02 & 1.762s & 0.66 (100.0\%)\\
        300 & 0.02 & 1.768s & 0.66 (100.0\%)\\
        \textbf{500} & 0.02 & \textbf{1.749s} & 0.52 (100.0\%)\\
        700 & 0.02 & 1.755s & 0.51 (100.0\%)\\
        900 & 0.02 & 1.753s & 0.51 (100.0\%)\\
        \bottomrule
    \end{tabular}
\end{table}

Table \ref{tab:ablation_top_m} investigates the impact of the RL candidate pool size (\texttt{probe\_rl\_top\_m}) used for the KL divergence calculation. This experiment highlights the importance of providing a sufficiently large set of candidate actions for the probe. With too few candidates (e.g., 5), the KL divergence is not a reliable indicator, resulting in a poor optimality gap (1.30\%) and a low trigger rate (43.0\%). Our default setting of 15 ensures that the divergence metric is calculated over a meaningful distribution, allowing for effective detection of critical junctures.

\begin{table}[htbp!]
\centering
%\scriptsize
%\small
\setlength{\belowcaptionskip}{10pt}  % 控制 caption 到后面内容的距离
    \centering
    \setlength{\tabcolsep}{5pt}
    % CORRECTED: Escaped underscore in \texttt
    \caption{Sensitivity analysis on the RL candidate pool size, \texttt{probe\_rl\_top\_m}, used for KL divergence calculation.}
    \label{tab:ablation_top_m}
    \begin{tabular}{cccc}
        \toprule
        % CORRECTED: Escaped underscore in \texttt
        \textbf{\texttt{Probe\_RL\_Top\_M}} & \textbf{Gap (\%)} $\downarrow$ & \textbf{Time (s)} & \textbf{Ave. trigger step (Trig. Rates)} \\
        \midrule
        5 & 1.30 & 0.761s & 4.05 (43.0\%) \\
        10 & 0.15 & 1.278s & 5.20 (97.1\%) \\
        15 & 0.02 & 1.749s & 0.52 (100.0\%) \\
        20 & 0.01 & 2.364s & 0.00 (100.0\%) \\
        \bottomrule
    \end{tabular}
\end{table}

Finally, Table \ref{tab:ablation_probe_m} analyzes the effect of the candidate selection threshold for DM exploration (\texttt{TopN\_cum\_Th}). This parameter controls the breadth of the DM's search once it is triggered. The results show a clear trade-off: a small threshold (e.g., 0.2) is faster but often fails to find a high-quality solution, yielding a suboptimal 0.12\% gap. Increasing the exploration breadth is crucial for capitalizing on the DM's generative power. Our default value of 0.8 allows the DM to explore a diverse set of high-probability candidates, which is vital for discovering the near-optimal paths that lead to our state-of-the-art 0.02\% gap.

\begin{table}[htbp!]
\centering
%\scriptsize
%\small
\setlength{\belowcaptionskip}{10pt}  % 控制 caption 到后面内容的距离
    \centering
    \setlength{\tabcolsep}{5pt}
    % CORRECTED: Escaped underscore in \texttt
    \caption{Ablation study on the candidate selection strategy for DM exploration, \texttt{n\_cumulative\_threshold}.}
    \label{tab:ablation_probe_m}
    \begin{tabular}{cccc}
        \toprule
        % CORRECTED: Escaped underscore in \texttt
        \textbf{\texttt{TopN\_cum\_Th}} & \textbf{Gap (\%)} $\downarrow$ & \textbf{Time (s)} & \textbf{Ave. trigger step (Trig. Rates)} \\
        \midrule
        0.2 & 0.12 & 0.158s & 0.52 (100.0\%) \\
        0.4 & 0.12 & 0.570s & 0.52 (100.0\%) \\
        0.5 & 0.08 & 0.723s & 0.52 (100.0\%) \\
        0.6 & 0.05 & 0.942s & 0.52 (100.0\%) \\
        0.8 & 0.02 & 1.749s & 0.52 (100.0\%) \\
        \bottomrule
    \end{tabular}
\end{table}

\subsection{Necessity of Dual-Signal Trigger Design}

Table \ref{tab:ablation_core_components} provides a comprehensive analysis of our core trigger mechanism. The results clearly show that the dual-criteria trigger, which combines both RL policy entropy and KL divergence, is essential for top performance. Relying solely on the KL-divergence or entropy trigger leads to significantly worse optimality gaps (0.09\% and 0.05\%, respectively). This confirms our hypothesis that RL policy uncertainty (entropy) and model disagreement (KL) are complementary signals. The former identifies when the RL agent is indecisive, while the latter detects when it is confidently wrong. Together, they form a robust and effective condition for invoking the diffusion model.

\begin{table}[htbp]\label{tab:trigger}
\centering
\setlength{\belowcaptionskip}{10pt}  % 控制 caption 到后面内容的距离
    \centering
    \small
    \setlength{\tabcolsep}{8pt} % Adjust column spacing
    \renewcommand{\arraystretch}{1.2} % Adjust row height
    \caption{Complete Ablation study on the core components of HyCO, evaluated on the TSP-100 dataset. Gap (\%) is the optimality gap compared to the Concorde solver. Time (s) is the average inference time per instance. Avg. Trigger Step Index indicates the average step number in the tour construction at which the Diffusion Model was first invoked. Trigger Rate denotes the percentage of instances where the DM was triggered at least once. The best-performing model is highlighted in bold.}
    \label{tab:ablation_core_components}
    \begin{tabular}{llccc}
        \toprule
        \textbf{Model / Method} & \textbf{Description: Triggers on} & \textbf{Gap (\%)} $\downarrow$ & \textbf{Time (s)} & \textbf{Ave. trigger step} \\
         &  & &  & \textbf{(Trig. Rates)} \\
        \midrule
        \multicolumn{5}{l}{\textit{Trigger Mechanism Ablation }} \\
        HyCO (Entropy) &  policy entropy only & 0.05 & 1.903s & 2.36 (99.8\%) \\
        HyCO (KL) & KL divergence only & 0.09 & 1.305s & 1.35 (97.0\%) \\
        \midrule
        \textbf{HyCO (Full Model)} & \textbf{Full Model (Entropy + KL)} & \textbf{0.02} & 1.749s & 0.52 (100.0\%) \\
        \bottomrule
    \end{tabular}
\end{table}

% These statistics support the adaptive nature of the trigger. In TSP, the early handover is consistent with the high sensitivity of tour quality to early node choices. In contrast, OP-200, RB-LARGE, and SATLIB exhibit much later average trigger steps, suggesting that the RL component remains useful over a longer prefix when sequential construction is still reliable. Thus, HyCO does not simply replace RL with a DM; instead, it allocates the construction process between the two solvers according to the trajectory-level trigger signals.

\newpage
\section{Implementation Details of Prefix\_Difusco and RL models}
\label{sec:app_implement_details}

Here we provide detailed specifications for our backbone conditional diffusion model, Prefix\_Difusco, used in the experiments. About the RL model, we uniformly use the code and environment provided by the RL4CO \cite{berto2023rl4co} package for training \footnote{https://github.com/ai4co/rl4co}. The detailed hyperparameters are listed in \ref{hyper_tab}.

\subsection{Model Architecture}
The core of Prefix\_Difusco modifies the GNN architecture from DIFUSCO\cite{Sun2023} for the conditional generation task. The key components are:
\begin{itemize}
    \item \textbf{Node Feature Embedding}: Node coordinates are first converted into high-dimensional features using a sinusoidal positional embedding. A binary feature indicating whether a node is part of the prefix is concatenated to this embedding. A final linear layer projects this combined feature vector to the GNN's expected input dimension ($d_{node} = 128$).
    \item \textbf{PrefixEncoder}: An LSTM-based encoder takes the sequence of node features corresponding to the prefix tour and outputs a single global conditioning vector ($d_{cond} = 256$). This vector summarizes the properties of the given partial tour.
    \item \textbf{DifuscoGNNEncoder}: This is the main denoising network. It is a 12-layer GNN that processes a graph where nodes have the features described above. At each layer, the GNN's message passing is conditioned by both the global prefix vector from the \texttt{Prefix\_Encoder} and a sinusoidal embedding of the current timestep $t$.
    \item \textbf{Output Head}: The GNN outputs a logit for each potential edge in the graph, representing the probability of that edge being part of the optimal tour.
\end{itemize}

\subsection{Training Procedure}
The model is trained to predict the ground-truth adjacency matrix $x_0$ from a noised version $x_t$ and a conditional prefix.
\begin{itemize}
    \item \textbf{Dataset and Conditioning}: The training dataset consists of TSP instances and their optimal tours. For each sample, we derive a training instance by randomly selecting a prefix of length $k$ from the optimal tour.
    \item \textbf{Diffusion Process}: We use a discrete diffusion process over $T=1000$ steps with a cosine noise schedule to corrupt the ground-truth adjacency matrix $x_0$ into a noisy matrix $x_t$.
    \item \textbf{Masked Loss Function}: The model's objective is to minimize the Binary Cross-Entropy (BCE) loss between the predicted adjacency matrix and the ground truth $x_0$. Crucially, the loss is only computed on the ``suffix'' edges—that is, all edges except those whose both endpoints are within the given prefix. This forces the model to learn how to best complete the tour.
    \item \textbf{Curriculum Learning}: To improve convergence and performance, we employ a multi-stage curriculum. The training starts with a distribution of long prefixes (e.g., $k \in [60,90]$), making the completion task easier. As training progresses, the distribution of $k$ shifts towards shorter, more difficult prefixes (e.g., $k \in [1,30]$), allowing the model to gradually master the full conditional generation task.
\end{itemize}

\subsubsection{Training for TSP-50}
The \texttt{Prefix\_Difusco} model for TSP-50 was trained from scratch. We employed a 5-stage curriculum learning strategy designed to gradually increase the task difficulty. Each stage was trained for 10 epochs, initializing from the best checkpoint of the previous stage.
\begin{itemize}
    \item \textbf{Stage 1 (Easy):} Trained on long prefixes with lengths $k \in [30, 49]$.
    \item \textbf{Stage 2 (Medium):} Trained on prefix lengths $k \in [10, 30]$.
    \item \textbf{Stage 3 (Hard):} Trained on the full range of prefix lengths $k \in [1, 49]$.
    \item \textbf{Stage 4 (Short Focus):} Focused on short prefixes with lengths $k \in [1, 20]$.
    \item \textbf{Stage 5 (Very Short Focus):} Further focused on very short prefixes with $k \in [1, 10]$ to enhance performance on early-step decisions.
\end{itemize}

\subsubsection{Training for TSP-100}
The model for TSP-100 was also trained from scratch following a similar multi-stage curriculum learning approach. The prefix length ranges for each stage were adjusted proportionally for the larger problem size to ensure a smooth learning progression from easy to hard completion tasks. The core hyperparameters, such as hidden dimensions and learning rate, were kept consistent with the TSP-50 model.
\begin{itemize}
    \item \textbf{Stage 1 (Easy):} Trained on long prefixes with lengths $k \in [61, 99]$.
    \item \textbf{Stage 2 (Medium):} Trained on prefix lengths $k \in [30, 60]$.
    \item \textbf{Stage 3 (Hard):} Trained on the full range of prefix lengths $k \in [1, 99]$.
    \item \textbf{Stage 4 (Short Focus):} Focused on short prefixes with lengths $k \in [1, 20]$.
    \item \textbf{Stage 5 (Very Short Focus):} Further focused on very short prefixes with $k \in [1, 10]$ to enhance performance on early-step decisions.
\end{itemize}

\subsubsection{Training for TSP-500}
Due to the significantly larger scale of TSP-500, we adopted a more advanced training strategy combining \textbf{transfer learning} and a tailored curriculum.
\begin{itemize}
    \item \textbf{Transfer Learning}: The TSP-500 model was not trained from scratch. Instead, it was initialized using the weights from our best-trained TSP-100 checkpoint. Weights were transferred for all layers with matching names and shapes (e.g., GNN layers, prefix encoder), providing a strong starting point and accelerating convergence.
    \item \textbf{Curriculum Learning}: After initialization, the model was fine-tuned on TSP-500 data using a 5-stage curriculum similar to the one for TSP-50, but with ranges adjusted for $N=500$ (e.g., Stage 1: $k \in [50, 100]$, Stage 2: $k \in [20, 50]$, etc.). For the results reported in this paper, we ran an accelerated training schedule of approximately 20 epochs in total, focusing on the most critical curriculum stages to balance performance and computational cost.
\end{itemize}

\subsubsection{Training for TSP-1000}
For the largest scale, TSP-1000, we continued the strategy of combining transfer learning with a specialized curriculum to manage the increased complexity and computational demands.
\begin{itemize}
    \item \textbf{Transfer Learning}: The TSP-1000 model was initialized with the weights from our best-performing TSP-100 checkpoint. This transfer learning approach provided a robust feature foundation, significantly accelerating the training convergence on the larger graph size.
    \item \textbf{Curriculum Learning}: Following initialization, the model was fine-tuned on the TSP-1000 dataset using a 4-stage curriculum over a total of 25 epochs. The training began with easier tasks (completing tours from long prefixes, with k up to 500) and progressively moved to more difficult scenarios, focusing on shorter prefixes (k down to 1) in later stages to refine the model's ability to make critical early decisions.
\end{itemize}

\subsection{Decoding Algorithms}
To convert the probabilistic heatmap output from our \texttt{\textbf{Prefix\_Difusco}} model into a valid TSP tour, we employ a deterministic greedy decoding strategy inspired by DIFUSCO \citep{Sun2023}. This approach ensures that given a heatmap, the resulting tour is always the same, which is crucial for the stability of the HyCO framework. The core decoding process follows a principled, multi-stage procedure designed to construct high-quality tours while respecting the prefix constraints.

The decoding algorithm proceeds as follows:
\begin{itemize}
    \item \textbf{Edge Score Calculation:} The raw adjacency probability matrix $P$ from the diffusion model is first symmetrized to ensure consistency ($P' = (P + P^T) / 2$). To favor shorter edges, which are fundamental to good TSP solutions, we compute an edge score for each potential edge $(i, j)$ by dividing its symmetrized probability by its Euclidean distance: $S_{ij} = P'_{ij} / \text{dist}(i, j)$. All possible edges are then sorted in descending order based on these scores.

    \item \textbf{Enforce Prefix Constraint:} Before any greedy selection, the decoder first enforces the given conditional prefix. All edges that form the given partial tour are mandatorily included in the solution set. A Union-Find data structure is initialized, and the degrees of the prefix nodes are updated accordingly to ensure these edges are fixed.

    \item \textbf{Greedy Spanning Path Construction:} The algorithm iterates through the globally sorted list of edges. For each candidate edge, it performs three checks:
    \begin{enumerate}
        \item It is not an existing prefix edge.
        \item Adding the edge will not result in any node having a degree greater than two.
        \item Adding the edge will not form a premature cycle (verified using the Union-Find data structure).
    \end{enumerate}
    If all conditions are met, the edge is added to the solution set, and the node degrees and Union-Find structure are updated. This process continues until a total of $N-1$ edges have been selected, forming a spanning path of all nodes.

    \item \textbf{Tour Finalization:} Once a spanning path of $N-1$ edges is formed, there will be exactly two nodes with a degree of one (the endpoints of the path). The final edge connecting these two endpoints is deterministically added to close the path and form a valid Hamiltonian cycle. The final list of $N$ edges is then converted into a sequential tour starting from the first node of the original prefix (or node 0 if no prefix was given).
\end{itemize}

\subsection{Hyperparameters}\label{hyper_tab}
This subsection provides a comprehensive summary of the hyperparameters for the core models employed in the HyCO framework: The configurations for our conditional diffusion model (\texttt{Prefix\_Difusco}), the Attention Model (AM), and POMO \cite{Kool2019,Kwon2020} are detailed in Table \ref{tab:hyperparams_all}, Table \ref{tab:hyperparams_all_2}, and Table \ref{tab:hyperparams_all_3}, respectively. To ensure rigorous comparability, we strictly adopted the architectural configurations (e.g., layers, dimensions) from the original DIFUSCO and RL4CO implementations, thereby isolating the performance gains to the HyCO framework logic rather than backbone scaling. Besides, for inference-specific parameters such as DDIM sampling steps, values were selected based on a preliminary analysis of the speed-quality trade-off.

\begin{table}[htbp]
\centering
%\scriptsize
\small
\setlength{\belowcaptionskip}{10pt}  % Controls the distance from the caption to the content below
\centering
\caption{Hyperparameters for Prefix\_Difusco across different problem sizes.}
\label{tab:hyperparams_all}
\setlength{\tabcolsep}{10pt} % Adjust column spacing
\renewcommand{\arraystretch}{1.2} % Controls row height
\begin{tabular}{@{}lcccc@{}}
\toprule
\textbf{Parameter} & \textbf{TSP-50} & \textbf{TSP-100} & \textbf{TSP-500} & \textbf{TSP-1000} \\
\midrule
\multicolumn{5}{l}{\textit{\textbf{Model Architecture}}} \\
Node Count (N) & 50 & 100 & 500 & 1000 \\
sparse factor(K) & N.A & N.A & N.A & 100 \\
GNN Layers (L) & 12 & 12 & 12 & 12 \\
Hidden Dimension & 256 & 256 & 256 & 256 \\
Node Embedding Dim & 128 & 128 & 128 & 128 \\
Prefix Condition Dim & 256 & 256 & 256 & 256 \\
\midrule
\multicolumn{5}{l}{\textit{\textbf{Diffusion Process}}} \\
Timesteps (T) & 1000 & 1000 & 1000 & 1000 \\
Beta Schedule & cosine & cosine & cosine & cosine \\
Inference Steps & 10 & 10 & 50 & 50 \\
  Inference Sampler &  DDIM  &  DDIM &  DDIM & DDIM \\
\midrule
\multicolumn{5}{l}{\textit{\textbf{Training}}} \\
Batch Size (per GPU) & 128 & 96 & 4 & 8 \\
Epoch & 50 & 50 & 20 & 25 \\
Training data (per epoch) & 1500000 & 1500000 & 128000 & 65000 \\
Learning Rate & 2e-4 & 2e-4 & 2e-5 & 1e-4 \\
Optimizer & Adam & Adam & Adam & Adam \\
Training Method & Curriculum & Curriculum & Transfer\&Curriculum & Transfer\&Curriculum \\
Environment & \multicolumn{4}{c}{2x NVIDIA A40 GPUs} \\
\bottomrule
\end{tabular}
\end{table}

\begin{table}[p]
\centering
%\scriptsize
%\small
\setlength{\belowcaptionskip}{10pt}  % 控制 caption 到后面内容的距离
\centering
\caption{Hyperparameters for Attention Model across different problem sizes. Need to notice, AM and POMO models for TSP-500 are trained on the tsp-200 instances and then generalized into the TSP 500.}
\label{tab:hyperparams_all_2}
\setlength{\tabcolsep}{20pt} % Adjust column spacing
\begin{tabular}{@{}lccc@{}}
\toprule
\textbf{Parameter} & \textbf{TSP-50} & \textbf{TSP-100} & \textbf{TSP-500} \\
\midrule
\multicolumn{4}{l}{\textit{\textbf{Model Architecture}}} \\
GNN Layers (L) & 3 & 3 & 3 \\
Hidden Dimension & 512 & 512 & 512 \\
Node Embedding Dim & 128 & 128 & 128 \\
Attention heads & 8 & 8 & 8 \\
Baseline & rollout & rollout & critic \\
\midrule
\multicolumn{4}{l}{\textit{\textbf{Training}}} \\
Batch Size (per GPU) & 512 & 512 & 1024 \\
Training data (each epoch) & 1280000 & 1280000 & 1280000 \\
Epoch & 100 & 100 & 120 \\
Learning Rate & 1e-4 & 1e-4 & 1e-4 \\
LR Scheduler & \multicolumn{3}{c}{multistep LR, gamma=0.1, milestone = [80, 95]} \\
Normalization & \multicolumn{3}{c}{batch} \\
Environment & \multicolumn{3}{c}{1x NVIDIA A40 GPUs} \\
\bottomrule
\end{tabular}
\end{table}

\begin{table}[htbp!]
\centering
%%\scriptsize
%\small
\setlength{\belowcaptionskip}{10pt}  % 控制 caption 到后面内容的距离

\centering
\caption{Hyperparameters for POMO across different problem sizes.}
\label{tab:hyperparams_all_3}
\setlength{\tabcolsep}{20pt} % Adjust column spacing
\begin{tabular}{@{}lccc@{}}
\toprule
\textbf{Parameter} & \textbf{TSP-50} & \textbf{TSP-100} & \textbf{TSP-500} \\
\midrule
\multicolumn{4}{l}{\textit{\textbf{Model Architecture}}} \\
GNN Layers (L) & 6 & 6 & 6 \\
Hidden Dimension & 512 & 512 & 512 \\
Node Embedding Dim & 128 & 128 & 128 \\
Attention heads & 8 & 8 & 8 \\
Augment & 8 & 8 & 8 \\
\midrule
\multicolumn{4}{l}{\textit{\textbf{Training}}} \\
Batch Size (per GPU) & 512 & 512 & 256 \\
Training data (each epoch) & 100000 & 100000 & 100000 \\
Epoch & 400 & 800 & 400 \\
Learning Rate & 1e-4 & 1e-4 & 1e-4 \\
LR Scheduler & \multicolumn{3}{c}{multistep LR, gamma=0.1, milestone = [80, 95]} \\
Normalization & \multicolumn{3}{c}{instance} \\
Environment & \multicolumn{2}{c}{1x NVIDIA A40 GPUs} & 2x NVIDIA A40 GPUs \\
\bottomrule
\end{tabular}
\end{table}

\section{Limitations and Broader Impacts}
\label{sec:limitations_impacts}

HyCO's theoretical analysis relies on explicit error-scaling and boundary assumptions on the RL and DM backbones. Extending the framework to other solver combinations may require re-examining these assumptions and adapting the trigger signals accordingly.

HyCO is a general neural combinatorial optimization method and is not tied to a specific high-stakes deployment. Its potential positive impact lies in improving the efficiency and quality of optimization systems, with possible applications in routing, scheduling, logistics, and resource allocation. Potential negative impacts could arise if such tools are deployed in sensitive domains where suboptimal objectives or incorrect solutions affect access to resources or services. In such settings, deployment should include domain-specific constraints, validation, and human oversight.

%%%%%%%%%%%%%%%%%%%%%%%%%%%%%%%%%%%%%%%%%%%%%%%%%%%%%%%%%%%%

\newpage

\end{document}